\documentclass[conference]{IEEEtran}
\usepackage{times}

\usepackage[numbers, sort&compress]{natbib}
\usepackage{multicol}
\usepackage[bookmarks=true]{hyperref}
\usepackage{graphicx}
\usepackage{caption}

\usepackage[normalem]{ulem}

\usepackage{amsmath} 
\usepackage{amssymb}  
\usepackage[table, xcdraw]{xcolor}
\usepackage[ruled,linesnumbered, noend]{algorithm2e}

\SetCommentSty{mycommfont}
\usepackage{soul}
\usepackage{comment}
\usepackage{multirow}
\usepackage{wrapfig}
\usepackage{subcaption}
\usepackage{balance}
\usepackage{soul, color}
\usepackage{wrapfig}

\usepackage{amsthm}

\newtheorem{mydef}{Definition}
\newtheorem{thm}{Theorem}

\renewcommand{\baselinestretch}{0.993}

\begin{document}

\newcommand{\mgs}{\textsc{Mgs}}

\title{
Multi Graph Search \\ for High-Dimensional Robot Motion Planning
  }


\author{\authorblockN{\textbf{Itamar Mishani and Maxim Likhachev}}
\authorblockA{Robotics Institute, School of Computer Science, Carnegie Mellon University \\
\texttt{\{imishani, maxim\}@cs.cmu.edu}}
}

\twocolumn[{%
  \begin{@twocolumnfalse}
    \maketitle  
    
    \begin{center}
      \includegraphics[width=0.376\textwidth]{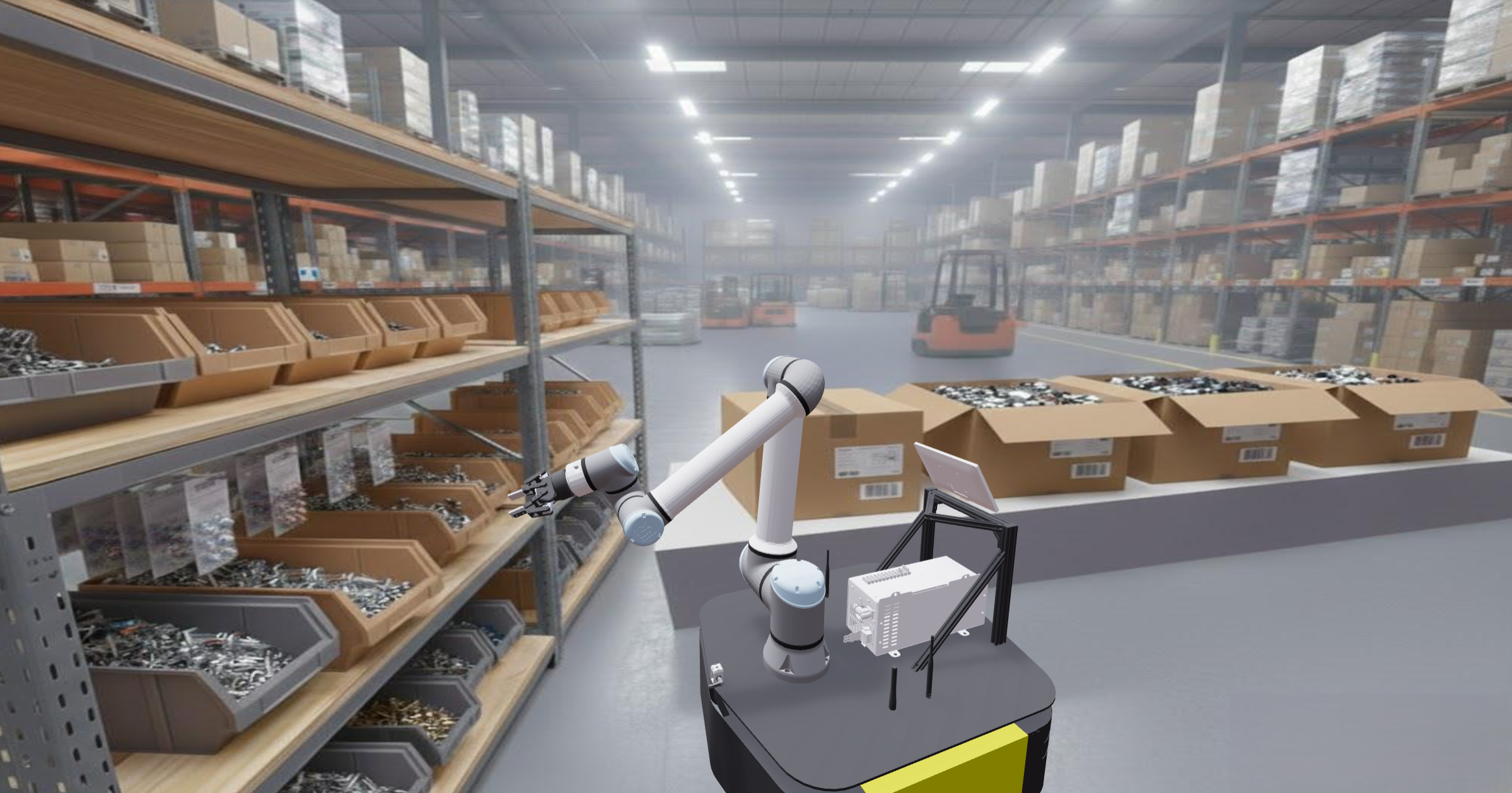}
      \hfill 
      \includegraphics[width=0.614\textwidth]{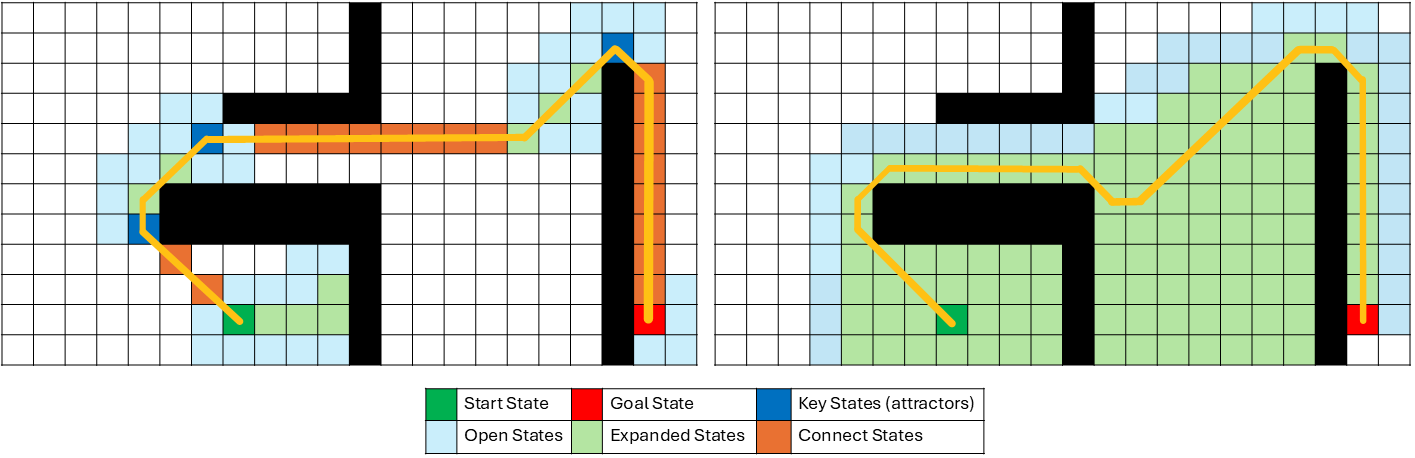}
      
      \captionof{figure}{
        Left: Mobile manipulators in warehouse settings demand efficient, predictable motion planning.
        Middle: \mgs{} anchors search at key states and grows multiple subgraphs simultaneously, yielding a solution with a substantial reduction in search efforts.
        Right: Weighted-A* expands significantly more states to solve the same problem.
        Both searches operate on an 8-connected 2D grid with bounded suboptimality of 10.
        }
      \label{fig:teaser}
    \end{center}
    
    \vspace{1em} 
  \end{@twocolumnfalse}
}]%



\begin{abstract}
Efficient motion planning for high-dimensional robotic systems, such as manipulators and mobile manipulators, is critical for real-time operation and reliable deployment.
Although advances in planning algorithms have enhanced scalability to high-dimensional state spaces, these improvements often come at the cost of generating unpredictable, 
inconsistent motions or requiring excessive computational resources and memory. 
In this work, we introduce \textit{Multi-Graph Search} (\mgs{}), a search-based motion planning algorithm that generalizes classical unidirectional and bidirectional search to a multi-graph setting. \mgs{} maintains and incrementally expands multiple implicit graphs over the state space, focusing exploration on high-potential regions while allowing initially disconnected subgraphs to be merged through feasible transitions as the search progresses. We prove that \mgs{} is complete and bounded-suboptimal, and empirically demonstrate its effectiveness on a range of manipulation and mobile manipulation tasks. Demonstrations, benchmarks and code are available at \href{https://multi-graph-search.github.io/}{\texttt{https://multi-graph-search.github.io/}}.

\end{abstract}

\section{Introduction}
Collision-free motion planning is a fundamental problem in robotics, and a wide range of algorithms have been developed to address it \cite{OMPL2012, srmp}.
Despite this progress, reliably solving motion planning problems for high-dimensional robotic systems---such as mobile manipulators---remains challenging, particularly in high-stakes applications where fast, predictable, and consistent motion planners are essential for deployment (Fig. \ref{fig:teaser}).
A central difficulty lies in the inherent trade-off between planning efficiency and reliability.
Sampling-based methods \cite{rrt,PRM,RRTstar} achieve scalability by exploring the configuration space through randomly sampled collision-free configurations.
While these approaches mitigate the computational burden in high-dimensional spaces by avoiding exhaustive exploration, 
their reliance on randomness often results in unpredictable behavior, inconsistent solutions, and limited guarantees beyond probabilistic guarantees.
Such variability is undesirable in settings that demand repeatability and performance assurances.
In contrast, search-based planning methods systematically expand states from the initial configuration toward the goal using heuristic guidance.
These methods offer deterministic behavior, completeness, high consistency, and bounded suboptimality.
However, their practical utility in high-dimensional systems is often hampered by the curse of dimensionality and a high sensitivity to heuristic accuracy.

This work is motivated by the observation that humans often solve complex tasks by identifying "mental landmarks"~\cite{hayes1979cognitive, generating-options}---promising regions of the state space that guide efficient problem-solving.
We leverage this intuition by introducing a planning algorithm that searches using a set of key states distributed throughout the state space.
Unlike traditional search-based methods that progress unidirectionally from start to goal, our approach simultaneously explores multiple promising directions anchored by these key states, 
which serve as intermediate landmarks to structure and focus the search.
This multi-directional search strategy enables efficient exploration of high-dimensional spaces while preserving the behavior characteristic and theoretical guarantees of classical search-based planners.
In particular, our approach maintains completeness and bounded suboptimality, while substantially improving practical performance in high-dimensional environments.

\noindent\textbf{Our main contributions are}:
\begin{itemize}
    \item A multi-directional heuristic search algorithm that performs simultaneous searches rooted at a set of key states distributed throughout the configuration space.
    \item Theoretical analysis that establishes completeness and bounded suboptimality with respect to discretization guarantees for the proposed algorithm.
    \item A principled strategy for selecting the key states (i.e., graph roots) to identify promising regions for avoiding collisions.
    \item Extensive simulation experiments and real-world demonstrations showing improved performance and reliability over traditional baselines.
\end{itemize}

\section{Related Work}

We review methods for high-dimensional motion planning, focusing on scalability, consistency, and the use of intermediate states to guide search.

\subsection{Classical Planning Approaches and Their Limitations}

\textit{\textbf{Sampling-based methods}} have become the standard for high-dimensional planning over the past two decades precisely because they scale well. 
Pioneering algorithms like Probabilistic Roadmaps (PRM) \cite{PRM} and Rapidly-exploring Random Trees (RRT) \cite{rrt} explore complex configuration spaces by random sampling, mitigating the curse of dimensionality inherent in exhaustive exploration.
However, due to the stochastic nature of sampling-based algorithms, solutions are often inconsistent (similar scenarios are likely to have very different solutions) or appear unintuitive (i.e., highly suboptimal paths).
In high-stakes industrial applications where reliability and repeatability are critical, this unpredictability presents challenges, leading practitioners to often rely on pre-recorded motions or restrictive assumptions~\cite{celemin2022interactive, orthey2023sampling, nardi2016experience, marcucci2025biconvex}.

In contrast, \textit{\textbf{search-based methods}} offer strong theoretical properties---completeness, (bounded sub-) optimality, and consistency---making them appealing for reliable deployment. 
Graph search algorithms like A* \cite{Hart1968} 
systematically expand states according to cost-to-come and heuristic estimates, with rigorous guarantees on solution quality.
Variants of these methods have been successfully applied to robotic navigation \cite{anytime_dynamicA*, D*lite, koenig2005fast, likhachev2009planning} and extended to higher-dimensional systems including manipulation \cite{cohen2010search, cohen2011adaptiveprimitives, egraphs, phillips2015efficient, aine2016mhastar, srmp}, mobile manipulation \cite{aine2016mhastar, chitta2010planning}, and multi-arm systems \cite{shaoulmishani2024xcbs, shaoul2024gencbs}.
However, search-based methods struggle to scale to high dimensions due to the curse of dimensionality, requiring substantial engineering effort in heuristic design and pruning to control computational complexity.

\textit{\textbf{Optimization-based methods}} offer a different perspective, formulating planning as trajectory optimization subject to constraints. Approaches like CHOMP \cite{ratliff2009chomp}, STOMP \cite{kalakrishnan2011stomp}, GPMP2 \cite{gpmp2}, and TrajOpt \cite{trajopt} leverage numerical optimization to generate smooth, dynamically feasible motions.
Yet these methods lack completeness or convergence guarantees and are highly sensitive to initialization, frequently converging to local minima in cluttered environments.

Recently, \textit{\textbf{learning-based methods}} have emerged as a promising direction, using neural networks to learn planning strategies from data \cite{qureshi2019motion, dalal2024neural, mpd, ichter2018learning}. 
Such approaches aim to leverage learned representations to plan, accelerate planning, or improve sampling efficiency.
While these methods show promise in reducing planning time through learned priors, they generally lack the completeness and bounded suboptimality guarantees essential for safety-critical applications, and their generalization to novel environments remains a challenge.

\subsection{Bidirectional and Multi-Directional Search Strategies}

A key insight from both sampling-based and search-based planning is that exploring from multiple directions can dramatically improve efficiency.
In sampling-based planning, bidirectional variants like RRT-Connect \cite{RRT-Connect} accelerate planning by simultaneously growing trees from start and goal, greedily attempting connections between frontiers to efficiently navigate narrow passages and reduce exploration redundancy.
The success of RRT-Connect demonstrates that connecting two exploration frontiers can be far more efficient than unidirectional search.
Extensions to multi-directional sampling-based methods \cite{multidirectionalrrt} have been explored, but rely on random selection of additional root states, inheriting the unpredictability characteristic of sampling-based approaches.
For search-based planning, bidirectional search strategies have been explored where frontiers expand simultaneously from start and goal \cite{pohl1969bi, kaindl1997bidirectional}.
While bidirectional heuristic searches often struggle with frontier alignment (the ``missile metaphor'' \cite{pohl1969bi}), approaches like A*-Connect \cite{islam2016connect} actively guide convergence while preserving suboptimality bounds \cite{lavasani2025anchor}.

The success of bidirectional search naturally motivates exploring from multiple frontiers, yet generalizing beyond two frontiers faces key technical challenges: identifying where to root searches is non-trivial (randomly sampling key states \cite{multidirectionalrrt} reintroduces the unpredictability search-based methods aim to avoid), coordinating multiple frontiers to ensure completeness and bounded suboptimality is algorithmically challenging, and efficiently merging disconnected subgraphs requires careful algorithmic design.
These challenges help explain why multi-directional search has remained largely unexplored in search-based motion planning.

\subsection{Subgoal-Guided and Region-Based Planning}

The intuition of guiding planners through high-potential regions or intermediate subgoals has been explored across multiple research communities.
In the heuristic search literature, landmark-based planning \cite{richter2008landmarks} has been widely employed to enhance heuristic estimates and decompose complex problems into manageable sequences of subgoals \cite{richter2010lama, pommerening2013incremental, segovia2022scaling, landmarksprog}.
However, these classical approaches typically enforce that landmarks be satisfied in a fixed order as part of the final solution, making them distinct from \mgs{}, which treats key states as \textit{optional} seeds for simultaneous multi-directional exploration rather than mandatory waypoints.

Similarly, in reinforcement learning, subgoals have been used to structure hierarchical learning and enable faster exploration \cite{goel2003subgoal, eysenbach2019search, bagaria2021skill}. 
Works in hierarchical RL leverage subgoals to decompose tasks into manageable subtasks, allowing agents to learn reusable skills and accelerate convergence.
This principle---that strategically chosen intermediate objectives can dramatically improve search efficiency---is equally applicable to motion planning.
By identifying and leveraging key states distributed throughout the configuration space, we can focus exploration toward promising regions and substantially accelerate planning, analogous to how subgoals guide learning in RL.

Translating this intuition into practice requires principled methods for identifying key states or promising regions. 
Recently, methods in search-based planning \cite{ctmp, mishani2023constanttime} and optimization-based planning \cite{mp_aroundCS} propose to decompose the environment into promising regions (e.g., convex regions or regions centered around key states) to facilitate more efficient planning and have shown to be especially effective in industrial settings.
Yet these methods require substantial preprocessing and strong assumptions about environment structure, limiting applicability in dynamic settings.

\mgs{} enables multi-directional search-based planning with principled online key state selection, 
combining scalability with theoretical guarantees.

\section{Preliminaries}
\label{sec:preliminaries}

We begin by formally defining the motion planning problem and its graph-based representation.
Next, we introduce focal search, a bounded suboptimal search technique that enables efficient exploration through inadmissible heuristics, forming the basis for our suboptimality guarantees.
Finally, we review attractor-based methods and introduce notations for root selection, providing the foundation for identifying strategic key states through workspace reasoning.

\subsection{Problem Formulation}
\label{subsec:problem_formulation}

We consider collision-free motion planning for a robot $\mathcal{R}$ with configuration space $\mathcal{C} \subseteq \mathbb{R}^d$,
where $d$ denotes the number of degrees of freedom.
The configuration space is partitioned into free space $\mathcal{C}_{\text{free}}$ and obstacle space $\mathcal{C}_{\text{obs}}$, such that $\mathcal{C} = \mathcal{C}_{\text{free}} \cup \mathcal{C}_{\text{obs}}$ and $\mathcal{C}_{\text{free}} \cap \mathcal{C}_{\text{obs}} = \emptyset$.
The robot operates in a world $\mathcal{W} \subseteq \mathbb{R}^3$, with end effector workspace $\mathcal{W}_{\text{eff}} \subseteq \mathcal{W}$ and task space $\mathcal{T} \subseteq SE(3)$.
A motion planning problem instance is defined by a start configuration $q_{\text{start}} \in \mathcal{C}_{\text{free}}$ and a goal condition (predicate) $\phi_{\text{goal}}: \mathcal{C} \rightarrow \{0,1\}$ that specifies the set of satisfying goal configurations $\mathcal{C}_{\text{goal}} = \{q \in \mathcal{C}_{\text{free}} \mid \phi_{\text{goal}}(q) = 1\}$.
In this work, we consider goal conditions defined either in configuration space or in the end effector task space
(e.g., $\phi_{\text{goal}}(q) = 1$ if $FK(q) \in \mathcal{T}_{\text{goal}}$, where $\mathcal{T}_{\text{goal}} \subseteq SE(3)$ is a desired task space region).
The objective is to find a collision-free path $\pi: [0,1] \rightarrow \mathcal{C}_{\text{free}}$ such that $\pi(0) = q_{\text{start}}$ and $\phi_{\text{goal}}(\pi(1)) = 1$, while minimizing a cost function $c(\pi)$.

\textbf{Graph Search Representation.}
To leverage graph search techniques, we discretize the configuration space $\mathcal{C}$ into a graph $G = (V, E)$, where $V$ is the set of vertices representing discrete configurations and $E$ is the set of edges representing feasible transitions between configurations.
Each vertex $v \in V$ corresponds to a configuration $q_v \in \mathcal{C}_{\text{free}}$.
Edges are defined by applying motion primitives $\mathcal{M} = \{m_1, \ldots, m_k\}$---pre-defined local motions---and an edge $(u, v) \in E$ is added only if the motion primitive connecting $q_u$ to $q_v$ is collision-free, as verified through collision checking.
The cost of traversing an edge $(u, v)$ is denoted by $c(u, v)$, typically defined as uniform cost or as the length of the motion primitive.
Since it is not feasible to enumerate all configurations in high-dimensional spaces, we construct an implicit graph where vertices and edges are generated on-the-fly during the search process as it expands already encountered states and applies motion primitives to generate new states.

\subsection{Focal Search for Sub-graph Exploration}
\label{subsec:focal_search}

Focal search \cite{anytime_focal} is a bounded suboptimal search technique that enables efficient exploration by relaxing strict optimality requirements.
In standard A* search, states are expanded in order of their $f$-value, where $f(s) = g(s) + h(s)$, with $g(s)$ being the cost-to-come and $h(s)$ an admissible heuristic estimate of the cost-to-go.
While this guarantees optimality, it can be inefficient when the heuristic provides weak guidance in complex environments.

Focal search introduces a bounded relaxation by maintaining two priority queues: \texttt{OPEN} (ordered by $f$-value) and \texttt{FOCAL} (a subset of \texttt{OPEN}).
Given a suboptimality bound $\epsilon \geq 1$, \texttt{FOCAL} contains all states $s \in$ \texttt{OPEN} with $f(s) \leq \epsilon \cdot f_{\min}$, where $f_{\min}$ is the minimum $f$-value in \texttt{OPEN}.
During expansion, focal search selects states from \texttt{FOCAL} according to an alternative criterion, often an inadmissible heuristic $\hat{h}(s)$ that provides stronger guidance than the admissible heuristic $h(s)$.
In our multi-graph search framework, focal search is used to provide bounded suboptimality guarantees while exploring individual sub-graphs rooted at key states (roots).

\subsection{Attractor-Based Methods for Root Selection}
\label{subsec:attractor_methods}

\begin{figure}
    \centering
    \includegraphics[width=0.95\columnwidth]{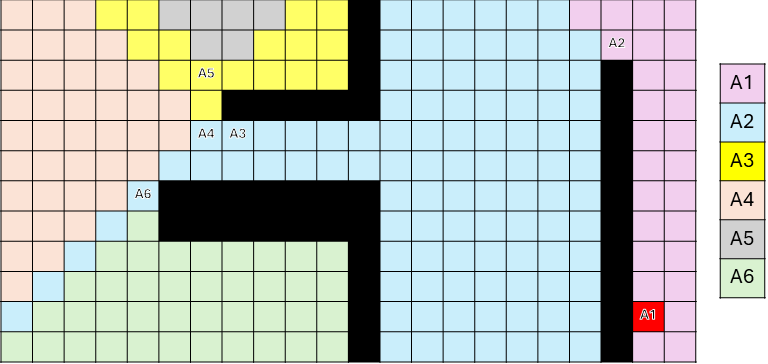}
    \caption{
        Illustration of attractor states ($\text{A}_i$) and their regions of trivial connectivity.
        States within each region can reach the corresponding attractor through greedy tracing. 
        The red cell is the state from which we expand the BFS wavefront. The grid is 8-connected with cardinal cost 1, diagonal cost $\sqrt{2}$, and Euclidean distance as the potential.
        }
    \label{fig:attractor_diagram}
\end{figure}

The concept of \emph{attractor states} was originally introduced for constant-time motion planning~\cite{ctmp} and memory-efficient search~\cite{zou2025attractor}.
In these works, attractors serve as representative states around which neighborhoods are organized, enabling efficient path reconstruction without storing complete paths explicitly.
The key mechanism underlying attractors is \emph{greedy tracing}, which allows states to reach attractors by iteratively following predecessors that minimize a potential function.

Given a search space $\mathcal{S}$ and a non-negative function $h_{\text{potential}}: \mathcal{S} \times \mathcal{S} \rightarrow \mathbb{R}_{\geq 0}$ (e.g., Euclidean distance), greedy tracing is formally defined as follows:

\begin{mydef}[Greedy Predecessor]
For a state $s \in \mathcal{S}$ and target state $s_{\text{target}} \in \mathcal{S}$, a predecessor $s' \in \textsc{Pred}(s)$ is the \emph{greedy predecessor} of $s$ with respect to $s_{\text{target}}$ if
\[
s' = \arg\min_{s'' \in \textsc{Pred}(s)} h_{\text{potential}}(s'', s_{\text{target}})
\]
\end{mydef}

Note that a tie-breaking rule is applied to ensure uniqueness of the greedy predecessor.

\begin{mydef}[Greedy Tracing]
Greedy tracing with respect to $h_{\text{potential}}$ from state $s$ toward target state $s_{\text{target}}$ iteratively selects the greedy predecessor of $s$ with respect to $s_{\text{target}}$, continuing until reaching $s_{\text{target}}$.
\end{mydef}

An attractor state $a \in \mathcal{S}$ is a state whose neighborhood can reach it via greedy tracing, defining a region of \emph{trivial connectivity} (Fig.~\ref{fig:attractor_diagram}).

\section{Multi-Graph Search}
\label{sec:mgs}

In this section, we present \mgs{} (\textbf{M}ulti-\textbf{G}raph \textbf{S}earch) by formalizing the multi-directional problem, introducing the anchor and connect heuristics for exploration and connectivity, describing the algorithm mechanics, and presenting our workspace-aware root selection method.

\subsection{Multi-Directional Problem Formulation}
\label{subsec:multidirectional_formulation}

Traditional search-based planners expand states from a single start configuration $q_{\text{start}}$ toward the goal condition $\phi_{\text{goal}}$.
This unidirectional approach can be inefficient in high-dimensional spaces, as it may explore large portions of the state space that do not contribute to finding a solution.
To address this limitation, \mgs{} employs a multi-directional search strategy that concurrently explores multiple search sub-graphs rooted at a set of key states (roots) $\{r_1, r_2, \ldots, r_m\}$ distributed throughout the configuration space.
Formally, instead of maintaining a single implicit graph $G = (V, E)$, we construct and explore a collection of sub-graphs, $ \mathcal{G} = \{G_1, G_2, \ldots, G_m\}$, where each sub-graph $G_i = (V_i, E_i)$ is an undirected graph rooted at key state $r_i \in \mathcal{C}_{\text{free}}$.
The roots include the start configuration ($r_1 = q_{\text{start}}$), intermediate key states (Section~\ref{subsec:root_selection}), and optionally the goal configuration ($r_m = q_{\text{goal}}$) when specified\footnote{For multi-goal problems, each goal configuration can serve as a root, up to a maximum of $m$ sub-graphs. Here, we assume a single goal configuration or end-effector pose.}.
We distinguish between two types of searches: the \textit{anchor search} $G_1$ rooted at $r_1 = q_{\text{start}}$ uses focal search (Section~\ref{subsec:focal_search}) with suboptimality bound $\epsilon \geq 1$ to explore toward the goal, while \textit{connect searches} $G_2, \ldots, G_m$ use single-queue search (\texttt{OPEN} only) to facilitate connections between regions of the configuration space.

Each sub-graph grows independently---the anchor search $G_1$ expands states from its \texttt{FOCAL} queue, while connect searches $G_2, \ldots, G_m$ expand from their respective \texttt{OPEN} queues.
A solution is found when there exists a connected path through the union of sub-graphs from $q_{\text{start}}$ to a configuration satisfying $\phi_{\text{goal}}$.
Formally, we seek a path $\pi$ that traverses connected sub-graphs:
\[
\pi = \pi_1 \oplus \pi_2 \oplus \cdots \oplus \pi_k
\]
where each $\pi_i$ is a path segment within a sub-graph $G_{j_i} \in \mathcal{G}$, $\pi_1$ starts at $q_{\text{start}}$ in $G_1$, consecutive segments connect at shared vertices, and $\phi_{\text{goal}}(\pi(1)) = 1$.

\subsection{Search Heuristics}
\label{subsec:heuristics}

The efficiency of \mgs{} relies on complementary heuristics that guide the anchor and connect searches:

\noindent\textbf{Anchor Search Heuristics.}
The anchor search $G_1$ uses focal search with two heuristics: \texttt{OPEN} is ordered by $f(s) = g(s) + h(s)$ where $h$ is admissible (e.g., joint Euclidean distance), and \texttt{FOCAL} is ordered by an inadmissible heuristic $\hat{h}(s)$ (task space distance) as described in Section~\ref{subsec:focal_search}.

\noindent\textbf{Connect Search Heuristics.}
Each connect search $G_i$ ($i \geq 2$) uses a single heuristic based on distance to the other sub-graphs:
\[h_{\text{connect}}(s) = \min_{s' \in \textsc{Frontier}(\mathcal{G} \setminus \{G_i\})} d(s, s')\] 
Where the distance function $d(s, s')$ is a pairwise heuristic (e.g., Euclidean distance in configuration space or workspace) measuring proximity between states $s$ and $s'$, and $\textsc{Frontier}(\mathcal{G} \setminus \{G_i\})$ denotes the set of states across all the \texttt{OPEN} lists of the sub-graphs except $G_i$.
In other words, $h_{\text{connect}}(s)$ corresponds to the Front to Front (F2F) heuristic \cite{de1983bidirectional}, guiding connect searches to prioritize expanding states closest to the frontiers of other sub-graphs.

\subsection{Multi-Graph Search Algorithm}
\label{subsec:mgs_algorithm}

Algorithm~\ref{alg:mgs} presents \mgs{}, our multi-directional graph search algorithm.
\mgs{} initializes by selecting root configurations (Lines~\ref{line:mgs}--\ref{line:initsubgraphs}, see Section~\ref{subsec:root_selection}) and constructing an anchor search $G_1$ rooted at $q_{\text{start}}$ with \texttt{OPEN} and \texttt{FOCAL} queues, and connect searches $G_2, \ldots, G_m$ each with a single \texttt{OPEN} queue.

The main loop (Line~\ref{line:mainloop}) alternates between two phases.
In Phase 1 (Anchor Expansion), the anchor search expands the best state from its \texttt{FOCAL} queue according to $\hat{h}$ (Line~\ref{line:anchor_expand}).
If this state satisfies the goal condition, a solution path is returned (Lines~\ref{line:goalcheck}--\ref{line:returnsol}).
Otherwise, the algorithm attempts to connect to nearby frontier states of other sub-graphs via collision-free edges (Line~\ref{line:anchor_connect}).
Successful connections trigger merging into $G_1$ (Line~\ref{line:anchor_merge}).
The merging procedure (\texttt{MergeSubGraphs}) re-roots the merged sub-graph at the connection state, integrates all its vertices and edges into the anchor graph, and propagates updated g-values and updates h-values accordingly, adding newly merged states to the anchor's \texttt{OPEN} and \texttt{FOCAL} queues.
States that were closed in the merged sub-graph are added to the anchor's \texttt{OPEN} queue, allowing re-expansion in the anchor search.
Additionally, if the expanded state was already closed in another sub-graph, the two are merged immediately without requiring an explicit edge connection (Line~\ref{line:closed_check}).

\begin{algorithm}[t]
    \SetAlgoVlined
    \SetCustomAlgoRuledWidth{0.7\textwidth}
    \SetKwFunction{GetRoots}{GetRoots}
    \SetKwFunction{InitializeSubGraphs}{InitializeSubGraphs}
    \SetKwFunction{InitializeFocalSearch}{InitializeFocalSearch}
    \SetKwFunction{InitializeSearch}{InitializeSearch}
    \SetKwFunction{OPEN}{OPEN}
    \SetKwFunction{CLOSED}{CLOSED}
    \SetKwFunction{FOCAL}{FOCAL}
    \SetKwFunction{AddVertex}{AddVertex}
    \SetKwFunction{ExpandState}{ExpandState}
    \SetKwFunction{FindGraphClosingState}{FindGraphClosingState}
    \SetKwFunction{TryToConnect}{TryToConnect}
    \SetKwFunction{MergeSubGraphs}{MergeSubGraphs}
    \SetKwFunction{ReconstructPath}{ReconstructPath}
    \SetKwFunction{GraphsConnectedTo}{GraphsConnectedTo}
    \SetKwFunction{AddVerticesAndEdges}{AddVerticesAndEdges}
    \SetKwFunction{GetConnectingPath}{GetConnectingPath}
    \SetKwFunction{ChooseMergingOrder}{ChooseMergingOrder}
    \SetKwFunction{IsEmpty}{IsEmpty}
    \SetKwFunction{TimeOut}{TimeOut}

    \SetKwInput{KwIn}{\textsc{Input}}
    \SetKwInput{KwOut}{\textsc{Output}}

    \scriptsize
    \caption{Multi-Graph Search (\mgs{})}
    \label{alg:mgs}
    \KwIn{Start configuration $q_{\text{start}} \in \mathcal{C}_{free}$ \newline
          Goal termination condition function $\Phi_{goal}: \mathcal{C} \rightarrow \{0,1\}$ \newline
          Maximum number of sub-graphs $m$ \newline
          Suboptimality bound $\epsilon \geq 1$ \newline
          Admissible heuristic $h: \mathcal{C} \rightarrow \mathbb{R}_{\geq 0}$ \newline
          Inadmissible focal heuristic $\hat{h}: \mathcal{C} \rightarrow \mathbb{R}_{\geq 0}$ \newline
          Connect Heuristic $h_{\text{connect}}: \mathcal{C} \times \mathcal{C} \rightarrow \mathbb{R}_{\geq 0}$
          }
    \KwOut{Collision-free trajectory from $q_{\text{start}}$ to a goal configuration satisfying $\Phi_{goal}$}
    \vspace{4pt}

    \tcp{Initialize sub-graphs rooted at key states}
    roots = \GetRoots($q_{\text{start}}$, $\Phi_{goal}$, $m-1$)
    \label{line:mgs}

    $\mathcal{G} \leftarrow \InitializeSubGraphs(\text{roots})$
    \label{line:initsubgraphs}

    $G_1$.\InitializeFocalSearch($\epsilon, h, \hat{h}$)

    $G_1$.\AddVertex($q_{\text{start}}, g=0$) \tcp*[r]{\color{orange} \scriptsize Initialize start with zero cost}

    \For{$i \in 2$ \KwTo $|\mathcal{G}|$}{
        $G_i$.\InitializeSearch($h_{\text{connect}}$)

        $G_i$.\AddVertex($r_i, g=0$) \tcp*[r]{\color{orange} \scriptsize Initialize root with zero cost}
    }

    \vspace{2pt}
    \tcp{Main search loop: each iteration expands one anchor state and all connect states}
    \vspace{2pt}
    \While{$\neg G_1$.\FOCAL{}.\IsEmpty{} $\land$ $\neg$ \TimeOut{}}{ \label{line:mainloop}
        \tcp{Phase 1: Anchor expansion}
        $q_a \gets G_1$.\FOCAL{}.pop() \label{line:anchor_expand}

        \If{$\Phi_{goal}(q_a)$}{ \label{line:goalcheck}
            \KwRet $G_1$.\ReconstructPath($q_{\text{start}}$, $q_a$) \label{line:returnsol}
        }

        \tcp{Try to connect to other graphs}
        \If{\TryToConnect($q_a$, $\mathcal{G} \setminus \{G_1\}$)}{ \label{line:anchor_connect}
            \For{$G_{\text{connected}} \in$ \GraphsConnectedTo($q_a$, $\mathcal{G}$)}{
                $\pi \gets $ \GetConnectingPath($G_1$, $G_{\text{connected}}$, $q_a$)

                $G_1$.\AddVerticesAndEdges($\pi$)

                $\mathcal{G} \gets$ \MergeSubGraphs($G_1$, $G_{\text{connected}}$, $\pi[|\pi|]$) \tcp*[r]{\color{orange} \scriptsize Anchor always receives merge} \label{line:anchor_merge}
            }
        }

        \tcp{Check if already expanded in another graph, else expand}
        \If{$q_a \in \bigcup_{j=2}^{|\mathcal{G}|} G_j.\CLOSED()$}{ \label{line:closed_check}
            $G_{\text{closed}} \gets$ \FindGraphClosingState($q_a$, $\mathcal{G}$)

            $\mathcal{G} \gets$ \MergeSubGraphs($G_1$, $G_{\text{closed}}$, $q_a$)
        }
        \Else{
            $G_1$.\ExpandState($q_a$)
        }

        \vspace{2pt}
        \tcp{Phase 2: Connect expansions}
        \For{$i \in 2$ \KwTo $|\mathcal{G}|$}{ \label{line:phase2}
            \If{$\neg G_i$.\OPEN{}.\IsEmpty{}}{
                $q_c \gets G_i$.\OPEN{}.pop()

                \If{\TryToConnect($q_c$, $\mathcal{G} \setminus \{G_i\}$)}{ \label{line:connect_try}
                    \For{$G_{\text{connected}} \in$ \GraphsConnectedTo($q_c$, $\mathcal{G}$)}{
                        $G_{\text{from}}, G_{\text{to}} \gets$ \ChooseMergingOrder($G_i$, $G_{\text{connected}}$, $\mathcal{G}$) \label{line:mergeorder}

                        $\pi \gets $ \GetConnectingPath($G_{\text{from}}$, $G_{\text{to}}$, $q_c$)

                        $G_{\text{from}}$.\AddVerticesAndEdges($\pi$)

                        $\mathcal{G} \gets$ \MergeSubGraphs($G_{\text{from}}$, $G_{\text{to}}$, $\pi[|\pi|]$) \label{line:connect_merge}
                    }
                }

                \If{$q_c \in \bigcup_{j=1, j \neq i}^{|\mathcal{G}|} G_j.\CLOSED()$}{
                    $G_{\text{closed}} \gets$ \FindGraphClosingState($q_c$, $\mathcal{G}$)

                    $\mathcal{G} \gets$ \MergeSubGraphs($G_i$, $G_{\text{closed}}$, $q_c$)
                }
                \Else{
                    $G_i$.\ExpandState($q_c$)
                }
            }
        }
    }
    \KwRet \texttt{Failure}

\end{algorithm}

In Phase 2 (Connect Expansions, Line~\ref{line:phase2}), each connect search $G_i$ ($i \geq 2$) expands its best state according to $h_{\text{connect}}$, which prioritizes states near frontier states of other sub-graphs.
Connection attempts and potential merges follow similar logic to Phase 1 (Lines~\ref{line:connect_try}--\ref{line:connect_merge}), with merging order determined by \texttt{ChooseMergingOrder} (Line~\ref{line:mergeorder}) based on which sub-graph is closer to the anchor search.
When two connect searches merge (neither being the anchor), their \texttt{OPEN} queues are combined but closed states are not reopened---the merged sub-graph inherits the closed sets of both, avoiding redundant expansions until the eventual merge into the anchor.
The algorithm terminates when either a solution is found via the anchor search (Line~\ref{line:returnsol}) or when the anchor's \texttt{FOCAL} queue becomes empty (Line~\ref{line:mainloop}), indicating no $\epsilon$-suboptimal solution exists.
Fig.~\ref{fig:teaser} illustrates this on a 2D grid: \mgs{} yields a solution with significantly fewer expansions than weighted A* under the same suboptimality bound.

\subsection{Workspace-Aware Root Selection via Backward BFS}
\label{subsec:root_selection}

\begin{algorithm}[!b]
    \SetAlgoVlined
    \SetCustomAlgoRuledWidth{0.7\textwidth}
    \SetKwFunction{GetRoots}{\textbf{GetRoots}}
    \SetKwFunction{OPEN}{OPEN}
    \SetKwFunction{CLOSED}{CLOSED}
    \SetKwFunction{ATTRACTORS}{ATTRACTORS}
    \SetKwFunction{INSERT}{Insert}
    \SetKwFunction{ExpandState}{ExpandState}
    \SetKwFunction{IsEmpty}{IsEmpty}
    \SetKwFunction{TimeOut}{TimeOut}
    \SetKwFunction{GetRobotConfiguration}{GetRobotConfiguration}
    \SetKwFunction{GetRobotEEgoalPosition}{GetRobotEEgoalPosition}
    \SetKwFunction{Cluster}{Cluster}
    \SetKwFunction{ForwardAttractors}{ForwardAttractors}

    \SetKwInput{KwIn}{\textsc{Input}}
    \SetKwInput{KwOut}{\textsc{Output}}

    \scriptsize
    \caption{Workspace-Aware Root Selection via Backward BFS}
    \label{alg:root_selection}
    \KwIn{Start configuration $q_{\text{start}} \in \mathcal{C}_{free}$ \newline
          Goal termination condition function $\Phi_{goal}: \mathcal{C} \to \{0,1\}$ \newline
          Maximum number of sub-graphs $m$}

    \KwOut{Set of root configurations $\{r_1, r_2, \ldots, r_m\}$}

    \textsc{\textbf{Note}:} Workspace $\mathcal{W}_{\text{eff}}$ is discretized into a 3D occupancy grid
    \vspace{4pt}

    \SetKwProg{getroots}{\GetRoots}{}{}

    \getroots{($q_{\text{start}}$, $\Phi_{goal}$, $m$)}{

        \If {goal configuration is fully specified}{
            $roots \gets \{\Phi_{goal}.\GetRobotConfiguration()\}$ \tcp*[r]{\color{orange} \scriptsize Include goal as root}
        }
        \Else{
            $roots \gets \emptyset$
        }
        \label{line:root_init}

        
        $w.state \gets \Phi_{goal}.\GetRobotEEgoalPosition()$ \tcp*[r]{\color{orange} \scriptsize Start from goal end effector position}

        $w.g \gets 0$ \tcp*[r]{\color{orange} \scriptsize Zero cost at goal}

        $w.attractor \gets w.state$ \label{line:bfs_init} \tcp*[r]{\color{orange} \scriptsize The BFS start state is the first attractor}

        \CLOSED $\gets \emptyset$

        \OPEN $\gets \{w\}$ \tcp*[r]{\color{orange} \scriptsize Initialize with start state (FIFO)}

        \ATTRACTORS $\gets \emptyset$ 
        
        \While{$\neg$ \OPEN.\IsEmpty{} }{ \label{line:bfs_loop}
            $w \gets$ \OPEN.pop()

            \INSERT $w$ into \CLOSED

            \ForEach{neighbor $w'$ of $w$}{
                \If{$w'$ is not in collision $\land$  $w' \notin$ \CLOSED }{
                    \If{$w'.g > w.g + \text{cost}(w, w')$}{
                        $w'.g \gets w.g + \text{cost}(w, w')$ \tcp*[r]{\color{orange} \scriptsize We use uniform cost}
                        $a \gets w.attractor$ \tcp*[r]{\color{orange} \scriptsize Get attractor from parent}

                        $\text{greedy\_pred} \gets \arg\min_{w'' \in \textsc{Pred}(w')} d(w'', a)$ \label{line:greedy_pred} \tcp*[r]{\color{orange} \scriptsize Greedy predecessor toward attractor}

                        \eIf{$\text{greedy\_pred} \neq w$}{
                            $w'.attractor \gets w.state$ \label{line:new_attractor} \tcp*[r]{\color{orange} \scriptsize New attractor found at this branching point}
                            \If{$w.state \notin $ \ATTRACTORS}{
                                \INSERT $(w.state, \GetRobotConfiguration(w))$ into \ATTRACTORS
                            }
                        }{
                            $w'.attractor \gets a$ \tcp*[r]{\color{orange} \scriptsize Inherit attractor from parent}
                        }
                    }
                    \If{$w' \notin$ \OPEN}{
                        \INSERT $w'$ into \OPEN
                    }
                }
            }
        }

        \tcp{Adding the forward attractors when following the computed policy from start to goal}
        $\texttt{forward\_attractors} \gets$ \ForwardAttractors($q_{\text{start}}$, $\Phi_{goal}.\GetRobotEEgoalPosition()$) \label{line:forward_attractors}

        \ForEach{$(w_{\text{attr}}, q_{\text{attr}}) \in \texttt{forward\_attractors}$}{
            \If{$(w_{\text{attr}}, q_{\text{attr}}) \notin \ATTRACTORS$}{
                $\ATTRACTORS \gets \ATTRACTORS \cup \{(w_{\text{attr}}, q_{\text{attr}})\}$
            }
        }
        
        \tcp{Cluster attractors to select up to maximum of $m$ roots}
        \eIf{$|\ATTRACTORS| + |roots| \leq m$}{ \label{line:cluster_check}
            $roots \gets roots \cup \{q_{\text{attr}} \mid (w_{\text{attr}}, q_{\text{attr}}) \in \ATTRACTORS\}$
        }{
            $clusters \gets$ \Cluster(\ATTRACTORS, $m - |roots|$) \label{line:cluster}

            \ForEach{cluster $c \in clusters$}{
                $q_{\text{rep}} \gets$ representative configuration of cluster $c$

                $roots \gets roots \cup \{q_{\text{rep}}\}$
            }
        }

        \KwRet $roots$
    }

\end{algorithm}

Selecting roots $\{r_1, r_2, \ldots, r_m\}$ is critical to \mgs{}'s efficiency.
We seek a principled method for selecting intermediate states that are likely to lie along useful paths from start to goal.
Identifying such waypoints directly in the high-dimensional configuration space $\mathcal{C}$ is computationally expensive.
Instead, we approximate the collision-free regions in the lower-dimensional end effector workspace $\mathcal{W}_{\text{eff}}$ by discretizing it into a 3D occupancy grid based on obstacle geometry.
We then employ backward breadth-first search (BFS) in this discretized workspace to identify intermediate \textit{attractor} states that mark the boundaries of distinct collision-free regions~\cite{zou2025attractor}.
These workspace attractors are mapped back to configuration space to serve as roots.
Alg.~\ref{alg:root_selection} outlines the procedure.

The algorithm proceeds in three stages: backward 3D BFS to compute a workspace policy and identify attractors, forward rollout to select attractors along the start-to-goal path, and clustering to respect the roots budget.
The BFS is initialized from the goal end effector position in the discretized workspace grid, with zero cost and itself as the initial attractor (Line~\ref{line:bfs_init}).
If the goal configuration is fully specified, it is also included as a root (Line~\ref{line:root_init}).
The core insight is that \textit{attractors} emerge at workspace locations where the shortest path from goal diverges due to obstacles~(Fig.~\ref{fig:attractor_diagram}).
As the BFS wavefront expands backward from the goal (Line~\ref{line:bfs_loop}), each state $w$ maintains a reference to its current attractor---the workspace position it is ``attracted toward'' when following a greedy path to the goal.
When expanding a neighbor $w'$, the algorithm checks whether the greedy predecessor toward the current attractor $a$ matches the actual BFS parent $w$ (Line~\ref{line:greedy_pred}).
If they differ, this indicates a branching point where the obstacle geometry forces paths to diverge, and $w$ becomes a new attractor (Line~\ref{line:new_attractor}).
Otherwise, $w'$ inherits the attractor from its parent.
Each attractor is stored along with a corresponding robot configuration obtained via differential inverse kinematics, seeded with the configuration of the previous attractor.
The backward BFS identifies attractors with respect to the goal, but we also want to identify important attractors with respect to the start.
After the BFS completes, the algorithm traces the computed policy forward from the start position to the goal, collecting the attractors encountered along this path (Line~\ref{line:forward_attractors}).
A further discussion and intuition on forward attractors appears in the appendix.
The forward attractors are added to the attractor set, and if the total number of attractors does not exceed the budget $m$, all are included as roots (Line~\ref{line:cluster_check}).
Otherwise, spatial clustering (e.g., $k$-means in workspace) groups nearby attractors, and a representative configuration from each cluster is selected as a root (Line~\ref{line:cluster}).
This ensures the roots remain well-distributed while respecting the computational budget.
The resulting root configurations $\{r_1, \ldots, r_m\}$ capture strategic waypoints that the robot's end effector must navigate around obstacles, providing \mgs{} with informed starting points for its connect searches.

\subsection{Theoretical Properties}
\label{subsec:theoretical_properties}

We establish that \mgs{} inherits the completeness and bounded suboptimality guarantees of focal search.
The following properties hold with respect to the implicit graph $G = (V, E)$ induced by the discretization described in Section~\ref{sec:preliminaries}.

\begin{thm}[Bounded Suboptimality]
\label{thm:suboptimality}
Let $h: \mathcal{C} \to \mathbb{R}_{\geq 0}$ be an admissible heuristic (i.e., $h(q) \leq c^*(q, q_{\text{goal}})$ for all $q \in \mathcal{C}$).
If \mgs{} returns a solution path $\pi$, then $\text{cost}(\pi) \leq \epsilon \cdot c^*$, where $c^*$ is the optimal solution cost and $\epsilon \geq 1$ is the suboptimality bound.
\end{thm}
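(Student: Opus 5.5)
The plan is to reduce the statement to the standard focal-search argument applied to the anchor search $G_1$. The key point is that solutions are only ever returned at Line~\ref{line:returnsol}, when a state $q_a$ popped from $G_1$.\texttt{FOCAL} satisfies $\Phi_{goal}$. The returned path is $G_1$.\texttt{ReconstructPath}, so its cost equals $g_1(q_a)$, the anchor's current cost-to-come. I will first check that this is a genuine upper bound on the path's cost in the implicit graph $G$. Vertices and edges added by \texttt{AddVerticesAndEdges} and \texttt{MergeSubGraphs} correspond to collision-free edges of $G$ with their true costs $c(u,v)$. The merge re-roots the absorbed sub-graph at the connection state and propagates $g$-values from the anchor's $g$ there. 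Hence, by induction on the number of $g$-updates, every finite $g_1(s)$ is the cost of an actual path from $q_{\text{start}}$ to $s$ stored via back-pointers. The $g$-values that connect searches computed relative to their own roots are never used in $G_1$.

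Next I establish the OPEN invariant: at the start of every iteration, if an optimal path $\pi^*$ from $q_{\text{start}}$ to the goal exists, then some state $s$ on $\pi^*$ lies in $G_1$.\texttt{OPEN} with $g_1(s)=g^*(s)$. I prove this by induction over iterations, as for A*. Initially $q_{\text{start}}$ is in OPEN with $g=0$. When the anchor expands the shallowest such $s$, its successor on $\pi^*$ receives $g\le g^*(s)+c(s,s')$ and is inserted. The two new cases are the merge operations. In a merge via connection, the $g$-values of merged states can only decrease, and states that were closed in the merged sub-graph are placed into the anchor's OPEN. So the witness either stays in OPEN or is replaced by a merged state on $\pi^*$ with optimal $g$. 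In a merge at Line~\ref{line:closed_check}, $q_a$ is not expanded in the usual way, but the merge re-roots at $q_a$ and inserts $q_a$'s successors (now part of the merged graph) into OPEN with $g$ propagated from $g_1(q_a)$. This plays the role of an expansion, and again the invariant persists.

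Given the invariant and admissibility of $h$, $f_{\min}\le f(s)=g^*(s)+h(s)\le c^*$ at every iteration. Because $q_a$ was drawn from FOCAL, $g_1(q_a)+h(q_a)=f(q_a)\le \epsilon f_{\min}\le \epsilon c^*$. Since $h\ge 0$, $\text{cost}(\pi)=g_1(q_a)\le\epsilon c^*$, which is the claim.

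I expect the main obstacle to be the merge steps in the invariant. One issue is the case where the witness state on $\pi^*$ had been closed in a connect search and then absorbed without being reopened. The other is the case where connect–connect merges keep closed sets. I would handle both by relying on the stated rule that every state closed in a merged sub-graph is added to the anchor's OPEN upon merging into $G_1$, so no closed status from connect searches carries over to the anchor. I would also require that \texttt{MergeSubGraphs} performs a full $g$-value propagation (a Dijkstra-like relaxation over the merged edges) rather than a partial one. If propagation only reaches frontier states, I would weaken the invariant to ``$g_1(s)\le g^*(s)$ for some $s\in\pi^*\cap\texttt{OPEN}$'', which is all the final inequality needs.
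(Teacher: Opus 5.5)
Your proposal is correct and follows the paper's proof. Solutions are returned only from the anchor's \texttt{FOCAL}, merges only insert states whose $g$-values are costs of real paths from $q_{\text{start}}$, and so $\mathrm{cost}(\pi)=g(q_a)\le f(q_a)\le\epsilon f_{\min}\le\epsilon c^*$. Where the paper simply asserts $f_{\min}\le c^*$ from admissibility, you prove it with an A*-style OPEN-witness induction (including the merge-at-a-closed-state case); as in any focal search, that induction tacitly requires the anchor to reopen states whose $g$-value improves, including states already in $V_{G_1}$ that a merge reaches.
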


\begin{proof}[Proof Sketch]
Solutions are returned exclusively through the anchor search $G_1$ when a state $q_a \in \texttt{FOCAL}$ satisfies $\Phi_{\text{goal}}(q_a) = 1$ (Alg.~\ref{alg:mgs}, Line 9).
The anchor maintains the focal search invariant: all states in \texttt{FOCAL} have $f(s) \leq \epsilon \cdot f_{\min}$, where $f_{\min} = \min_{s \in \texttt{OPEN}} f(s)$.
Since $h$ is admissible, $f_{\min} \leq c^*$ throughout the search.
When connect searches merge into the anchor via \texttt{MergeSubGraphs}, the g-values of merged states are recomputed with respect to $q_{\text{start}}$ by propagating costs through the connecting path.
This preserves the focal invariant: merged states are inserted into \texttt{OPEN} with correct g-values, and only enter \texttt{FOCAL} if their f-values satisfy the bound.
Since the returned solution has $g(q_a) = \text{cost}(\pi)$ and $q_a$ was in \texttt{FOCAL}, we have $\text{cost}(\pi) = f(q_a) \leq \epsilon \cdot f_{\min} \leq \epsilon \cdot c^*$.
\end{proof}

\begin{thm}[Bounded Re-Expansions]
\label{thm:reexpansions}
If $h$ is consistent (i.e., $h(q) \leq c(q, q') + h(q')$ for all edges $(q, q')$), then each state is expanded at most twice during \mgs{} execution.
\end{thm}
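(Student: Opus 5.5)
The bound splits each state's expansions into two epochs, separated by the single event that can reopen it: the merge of its sub-graph into the anchor. First, a state is expanded at most once while it belongs to a connect sub-graph $G_i$ ($i\geq 2$), including any sub-graph obtained by merging connect searches. When two connect searches merge, the paper specifies that their closed sets are inherited and \emph{not} reopened. Moreover, Alg.~\ref{alg:mgs} never expands a popped state that is already closed in another sub-graph; it merges instead. So within the union of connect searches, every state is closed at most once. I would phrase this as an invariant: at any time, the \texttt{CLOSED} sets of $G_2,\ldots,G_{|\mathcal{G}|}$ are pairwise disjoint and only grow, until their sub-graph is absorbed by $G_1$.

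Second, a state is expanded at most once within the anchor search $G_1$. Before any merge, $G_1$ is a focal search whose \texttt{OPEN} is ordered by $f=g+h$ with $h$ consistent. Consistency makes $f$ nondecreasing along edges. The key is how states enter $G_1$:
\begin{itemize}
\item either by the anchor's own successor generation,
\item or through \texttt{MergeSubGraphs}, which inserts all merged states (including previously closed ones) into \texttt{OPEN} with g-values recomputed relative to $q_{\text{start}}$.
\end{itemize}
I would argue that the anchor, like weighted/focal A* without re-opening, keeps its own \texttt{CLOSED} set. A state closed in $G_1$ is never reinserted into \texttt{OPEN}, so it is popped and expanded from $G_1$ at most once. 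This matches the standard convention for bounded-suboptimal search with consistent $h$, where re-expansion is unnecessary. Merged states that were closed in a connect sub-graph enter $G_1$'s \texttt{OPEN} exactly once, at merge time, because a sub-graph is absorbed into $G_1$ only once and then ceases to exist as a separate search.

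Combining the two epochs, each state is expanded at most once before absorption (in a connect search) and at most once after (in the anchor), giving at most two expansions. One case needs care: line~\ref{line:closed_check}, where the anchor pops a state already closed in $G_j$. That branch merges rather than expands, so it adds no expansion. The symmetric case, where a connect search pops a state closed in $G_1$, likewise merges $G_i$ into the anchor without expanding.

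The main obstacle is the anchor epoch. I must justify that updated g-values propagated during \texttt{MergeSubGraphs}, or later improvements discovered by the anchor, never force a closed anchor state back into \texttt{OPEN}. Here I would use consistency of $h$ to argue that anchor states are closed in nondecreasing order of $f_{\min}$. Any later-found cheaper path to an anchor-closed state therefore cannot lower its $f$ below the current $f_{\min}$ in a way that would violate the focal invariant. Hence, as in weighted A* without reopening, discarding such improvements preserves Theorem~\ref{thm:suboptimality} and no reopening is needed. If that argument proves too delicate, the fallback is to state the theorem relative to the algorithm's explicit no-reopen rule for $G_1$.
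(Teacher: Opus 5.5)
Your two-epoch split (at most one expansion in the connect searches, at most one in $G_1$) is the same as the paper's, but the connect-epoch claim does not follow from your invariant. A state can lie in several sub-graphs, so ``the merge of its sub-graph into the anchor'' is not a single event per state. Concretely: $q$ is closed in $G_2$, and then $G_2$ is absorbed. Now $q$ sits in $G_1$'s \texttt{OPEN}, and $G_2$'s \texttt{CLOSED} set no longer exists. If $G_3$ then pops $q$ before $G_1$ expands it, the Phase-2 test does not fire, because it consults only \texttt{CLOSED} sets. Nothing in your argument stops $G_3$ from expanding $q$. If $G_3$ is later absorbed through some other connection, $G_1$ can expand $q$ a third time. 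Pairwise disjointness of the \emph{current} \texttt{CLOSED} sets may hold at each instant, but it says nothing about closings spread over time.

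You need an extra ingredient here. One option is a persistent ``already expanded'' mark that the Phase-2 test consults. Another is a proof that \textsc{TryToConnect} necessarily matches $q$ on $G_1$'s frontier first, \emph{and} that the absorbed search then does not expand $q_c$; Alg.~\ref{alg:mgs} as written still reaches \textsc{ExpandState} after the merge. Likewise, ``the merged sub-graph inherits both closed sets'' must be imposed rather than read off Alg.~\ref{alg:merge}. \textsc{AddStateTo} returns early for states already in the receiver and is never called on the merge point, so closed status can be lost. The paper's sketch glosses over these same points.

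At the anchor step you depart from the paper, which credits consistency. Your explicit no-reopen rule is what actually makes this half work. $G_1$ pops by $\hat h$, so states can be closed with a suboptimal $g$, and with reopening $G_1$ alone can expand a state repeatedly even for $h\equiv 0$. The same unstated rule is needed inside the connect searches, whose $h_{\text{connect}}$ is neither consistent nor static. So adopt the rule outright and drop your last paragraph. That paragraph is about Theorem~\ref{thm:suboptimality}, not the expansion count, and its conclusion is false: the weighted-A* no-reopening result does not carry over to focal search. Once an optimal-path state is closed with an inflated $g$, nothing keeps $f_{\min}\le c^*$, and small examples with $h\equiv 0$ return cost above $\epsilon c^*$. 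With the rule in place, consistency of $h$ plays no role in your count.
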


\begin{proof}[Proof Sketch]
With a consistent heuristic, focal search never re-expands states within a single sub-graph.
A state $q$ may be expanded once in a connect search $G_i$ before merging, and at most once more in the anchor $G_1$ after merging (since merged states are added to \texttt{OPEN} and may be re-expanded with updated g-values).
Once expanded in the anchor with the correct g-value, consistency ensures $q$ will not be expanded again.
\end{proof}

\begin{thm}[Completeness]
\label{thm:completeness}
If a solution exists, \mgs{} will find one (given sufficient time and memory).
\end{thm}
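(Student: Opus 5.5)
The plan is to reduce completeness of \mgs{} to that of focal search on the implicit graph $G$, on the usual assumptions that the reachable portion of $G$ is finite, edge costs are positive, and we ignore the timeout. The argument has three parts: an invariant saying that the anchor's \texttt{OPEN} always contains a state on some solution path, a termination argument for the main loop, and a check that returned paths are valid.

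\textbf{Invariant.} Fix a solution path $\pi^* = (q_{\text{start}}=v_0, v_1, \ldots, v_n)$ in $G$ with $\Phi_{goal}(v_n)=1$. We claim that, as long as \mgs{} has not returned, $G_1.\texttt{OPEN}$ contains some $v_k$ on $\pi^*$. The standard A*/focal search argument applies: take the first vertex of $\pi^*$ that the anchor has not closed with its current $g$-value. Its predecessor on $\pi^*$ was expanded in $G_1$, which inserted it into \texttt{OPEN}.

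Two non-standard events can affect this.
\begin{itemize}
  \item When $q_a$ is already closed in some $G_j$ (Line~\ref{line:closed_check}), \mgs{} merges instead of calling \texttt{ExpandState}. Since closed states of merged sub-graphs are placed into the anchor's \texttt{OPEN}, the successors of $q_a$ that $G_j$ generated become anchor-open with $g$-values propagated from $q_{\text{start}}$. The successor on $\pi^*$ is therefore covered.
  \item \texttt{MergeSubGraphs} only adds vertices and edges and lowers $g$-values. It never removes a state from the anchor's \texttt{OPEN} except by expansion.
\end{itemize}
Hence the invariant survives every iteration. It follows that $f_{\min}$ is finite and $\texttt{OPEN}\neq\emptyset$. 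Because \texttt{FOCAL} always contains the $f_{\min}$ state, \texttt{FOCAL} is nonempty as well, so the loop cannot exit on Line~\ref{line:mainloop} with \texttt{Failure} while $\pi^*$ exists.

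\textbf{Termination.} Each iteration pops one anchor state. Connect searches are finite, since each of the finitely many sub-graphs closes each state at most once. Theorem~\ref{thm:reexpansions} bounds re-expansions when $h$ is consistent. In the general case, a state is re-inserted into the anchor's \texttt{OPEN} only when its $g$-value strictly decreases or a merge occurs. There are at most $m-1$ merges, and $g$-values range over the finitely many acyclic path costs in a finite graph. So the anchor performs finitely many pops. Combined with the invariant, the loop must end at Line~\ref{line:returnsol} with a goal state.

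\textbf{Validity.} The returned path comes from \texttt{ReconstructPath} in $G_1$. Every edge of $G_1$ is either generated by anchor expansion, taken from a merged sub-graph's edges, or a collision-checked connecting path $\pi$. All of these are edges of $G$, so the result is a feasible solution.

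The main obstacle is the first bullet of the invariant step: showing that when $q_a$ is merged rather than expanded, all of its successors needed on $\pi^*$ actually reach the anchor's \texttt{OPEN}. This depends on the paper's stated convention that closed states of merged sub-graphs are reopened in the anchor with recomputed $g$-values. I would argue it by noting that $G_j$ generated every successor of $q_a$ when it closed $q_a$, and that the merge imports all of $V_j$. If that convention fails for connect--connect merges, which inherit closed sets without reopening, the fix is to observe that the eventual merge into the anchor reopens those states anyway.
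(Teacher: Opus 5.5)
Your proposal is correct and follows the same route as the paper's sketch: the anchor's focal search is complete, and merges only add states to its \texttt{OPEN}. You make this rigorous via the standard invariant that \texttt{OPEN} always holds a vertex of $\pi^*$ plus an explicit finiteness argument. Your treatment of the skipped expansion at Line~\ref{line:closed_check}, where the merged sub-graph's edges out of $q_a$ stand in for the anchor's own expansion, fills a step the paper's sketch glosses over. One small correction: the segments added by \textsc{TryToConnect}/\textsc{GetConnectingPath} are collision-checked straight-line interpolations, not motion-primitive edges of $G$. The returned path is therefore feasible but not necessarily a path in $G$, and your finiteness assumption should also cover these at most $m-1$ added segments.
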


\begin{proof}[Proof Sketch]
The anchor search $G_1$ performs focal search, which is complete: it systematically expands states from \texttt{FOCAL} and will eventually expand all reachable states if no solution is found earlier.
Connect searches and merging only accelerate the search by adding states to the anchor's \texttt{OPEN} queue---they never remove states or prevent exploration.
If a solution path exists from $q_{\text{start}}$ to a goal state, the anchor will eventually either (a) discover it directly through expansion, or (b) discover it through states added via merging.
In either case, the goal state will eventually enter \texttt{FOCAL} and be returned.
\end{proof}


\begin{figure*}[!ht]
    \centering
    \includegraphics[width=0.135\textwidth]{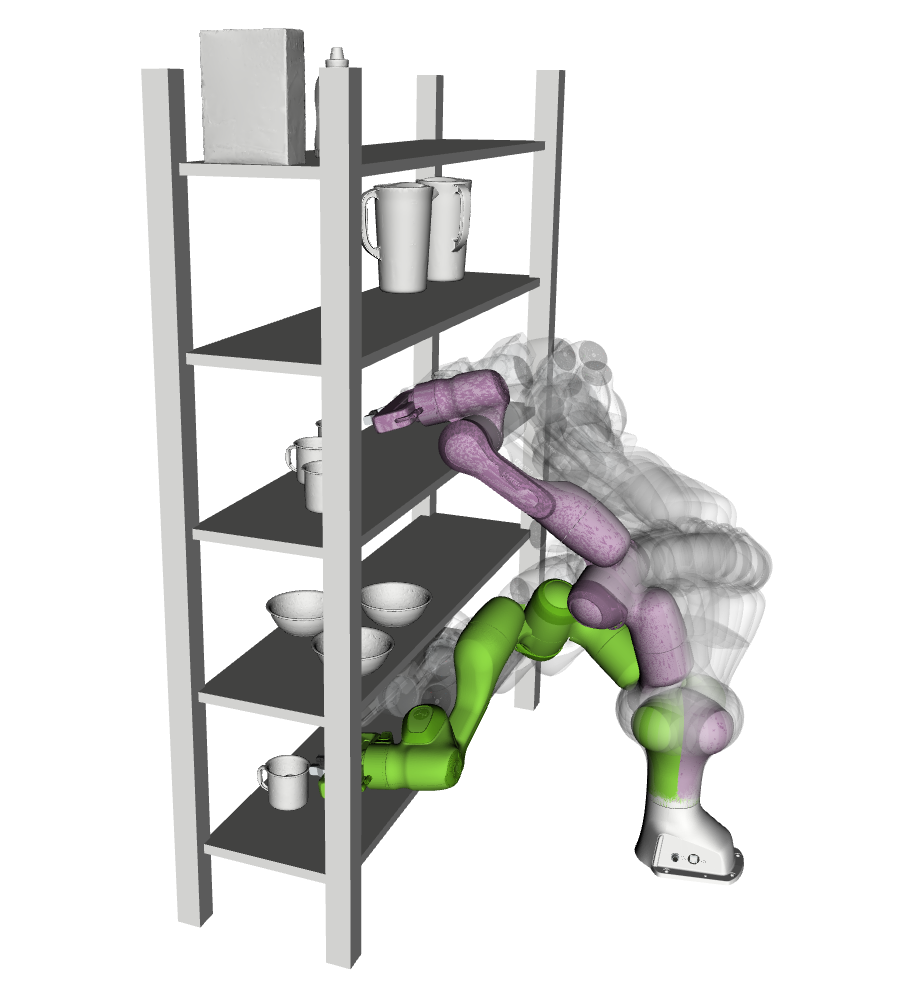}
    \includegraphics[width=0.135\textwidth]{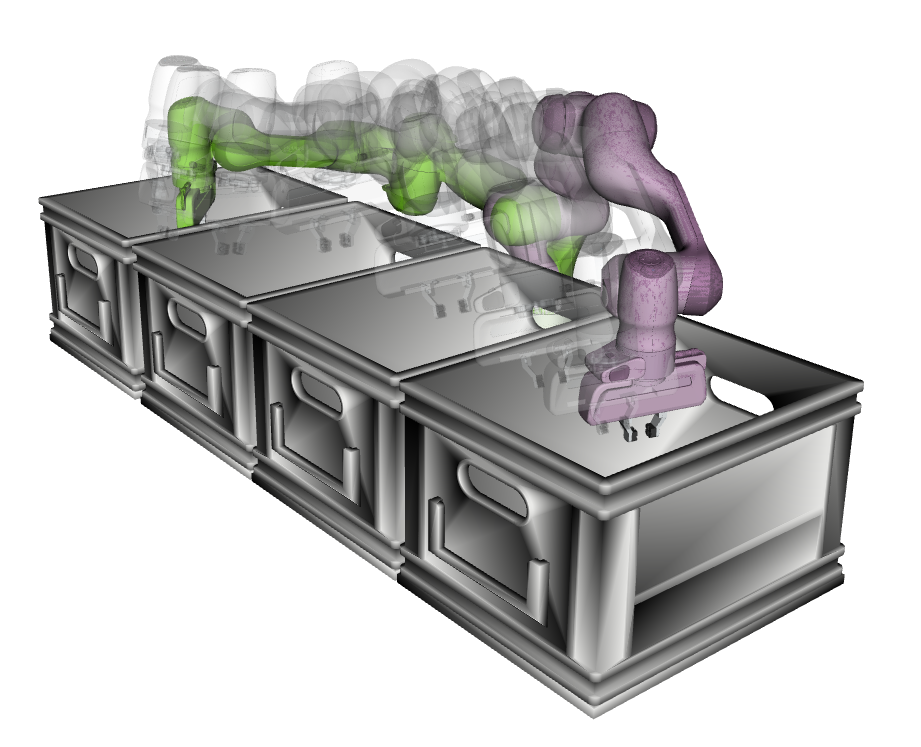}
    \includegraphics[width=0.135\textwidth]{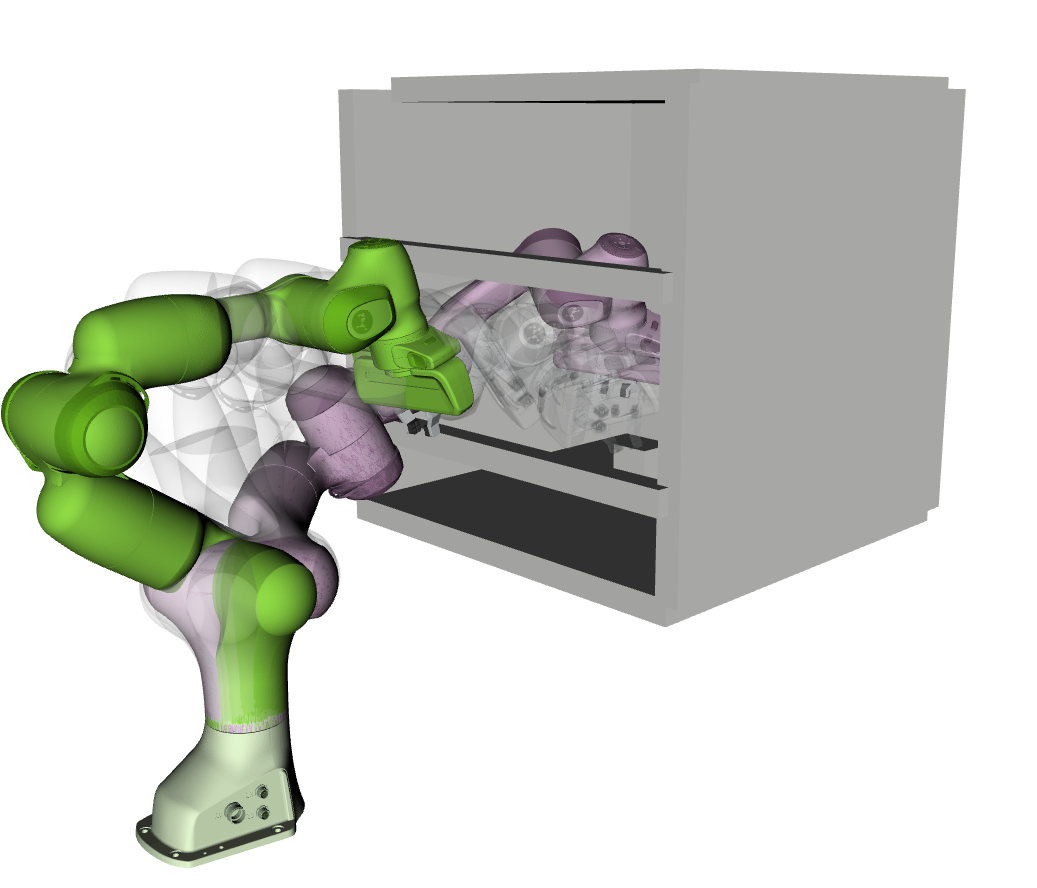}
    \includegraphics[width=0.135\textwidth]{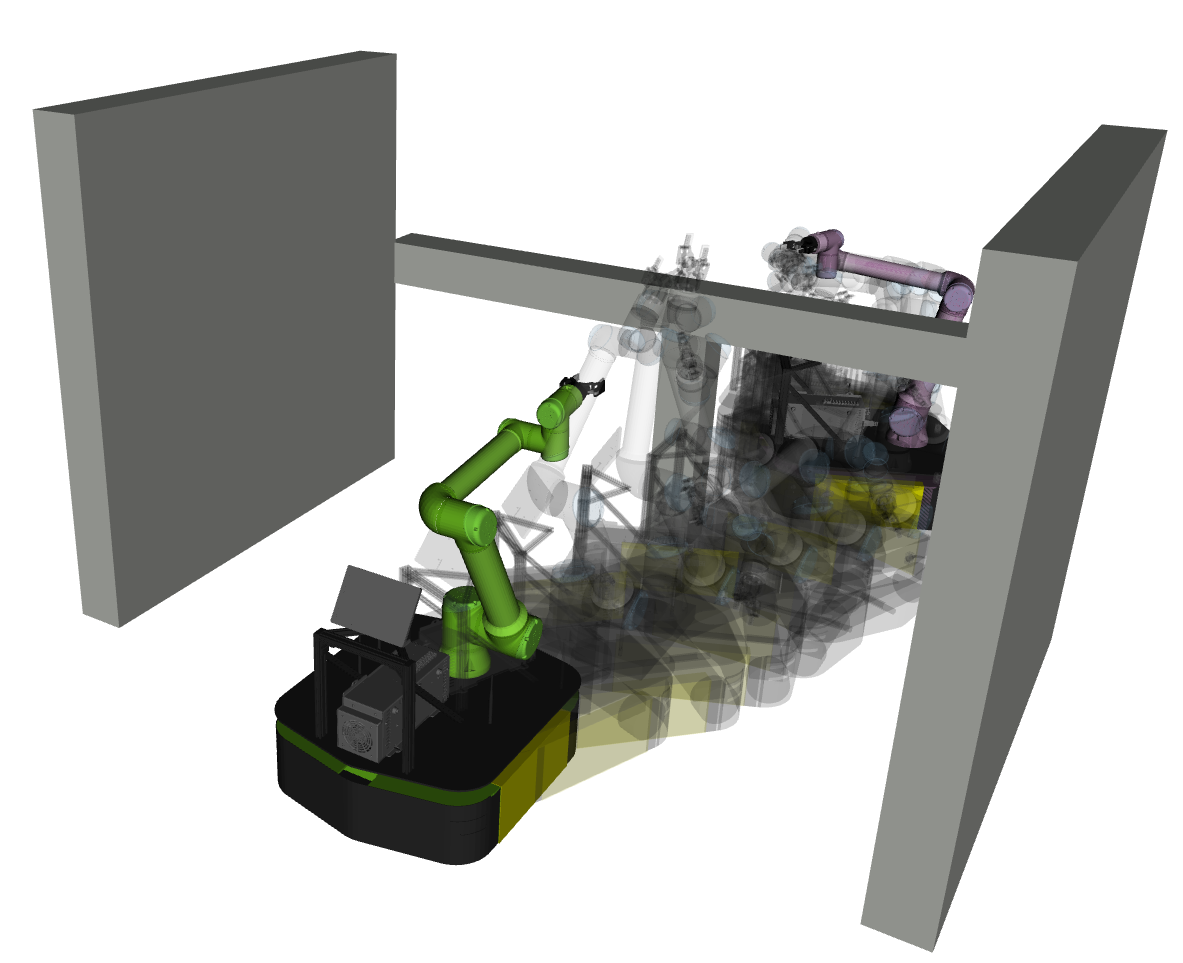}
    \includegraphics[width=0.135\textwidth]{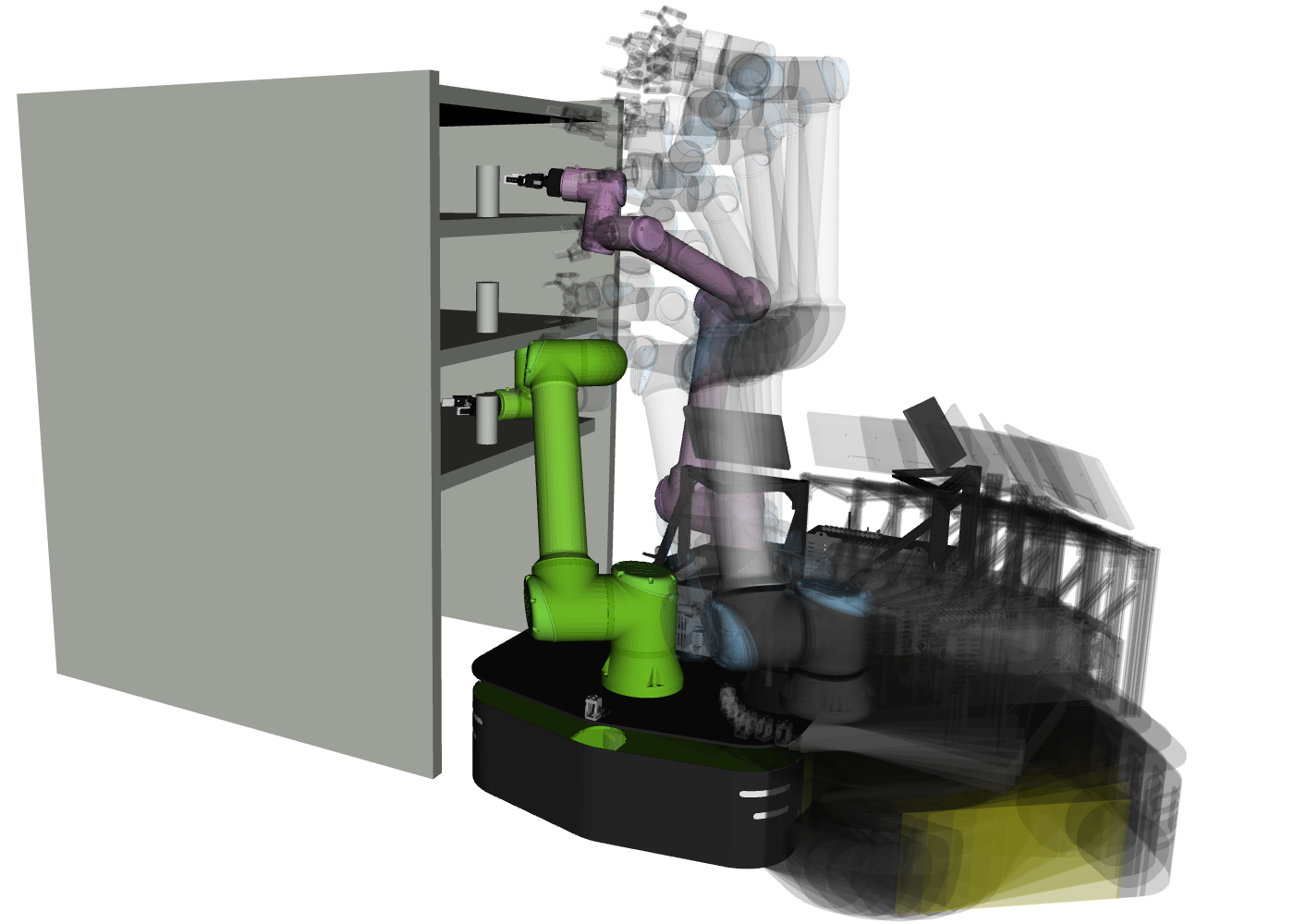}
    \includegraphics[width=0.135\textwidth]{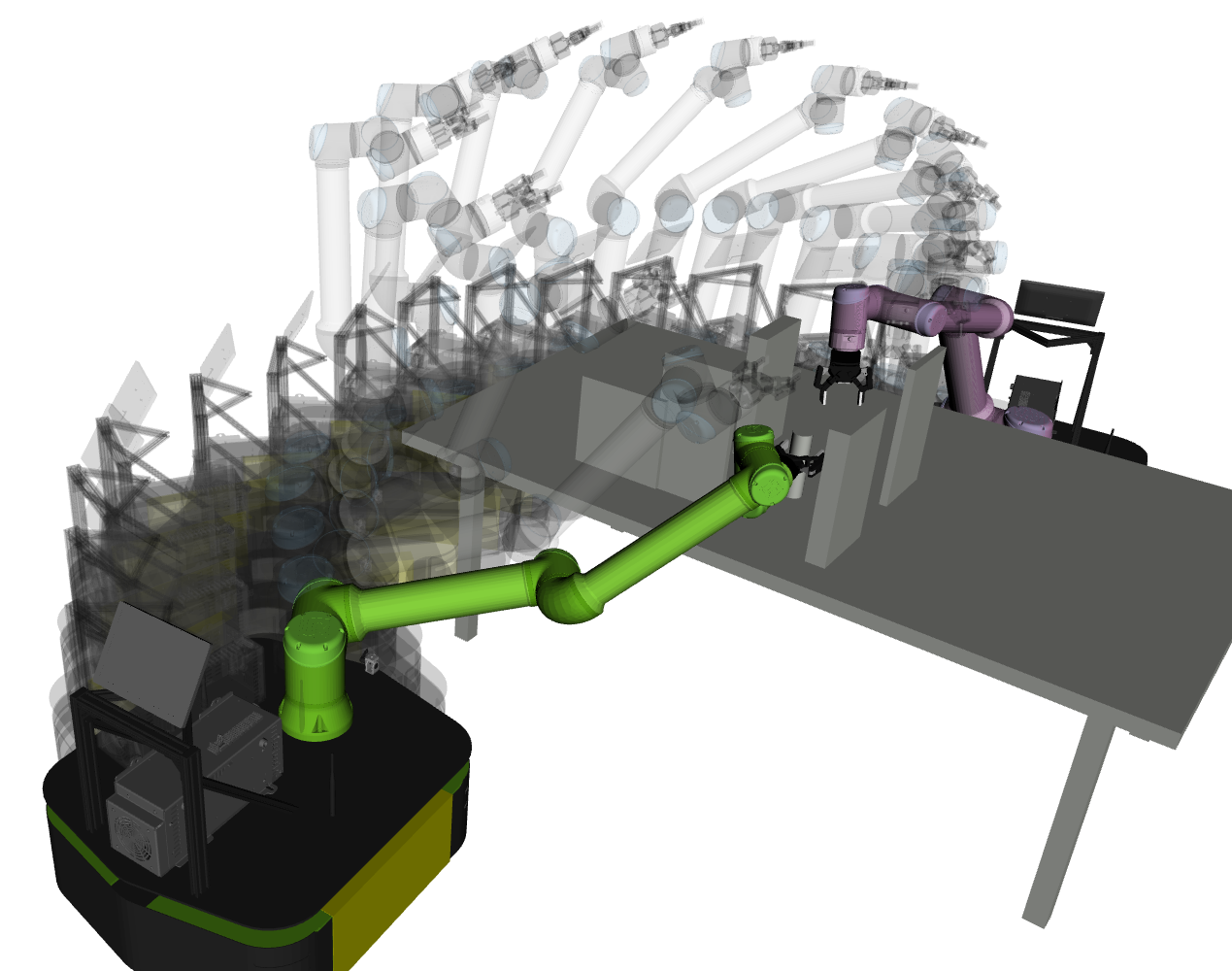}
    \includegraphics[width=0.135\textwidth]{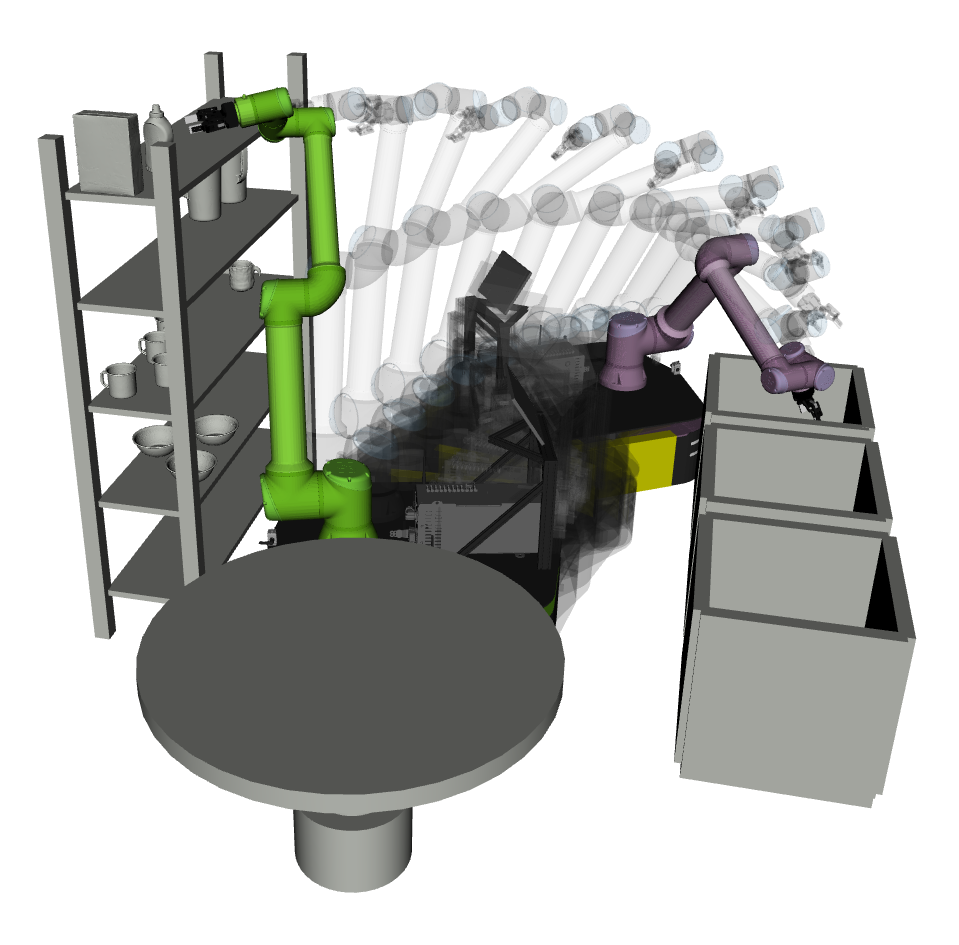}
    \caption{Experimental environments for manipulation (left three: shelf pick-and-place, bin picking, cage extraction) and mobile manipulation (right four: low-clearance passage, deep shelf reach, cluttered table, combined warehouse).}
    \label{fig:environments}
\end{figure*}

\section{Experiments}
\label{sec:experiments}

Our experimental design focuses on two core objectives: demonstrating that \mgs{} can efficiently generate high-quality solutions for complex motion planning tasks, and validating the algorithm's consistency across repeated and perturbed trials.
We compare \mgs{} against state-of-the-art sampling-based baselines provided by OMPL~\cite{OMPL2012} and optimization-based baselines, 
as well as search-based planners via the SRMP framework~\cite{srmp}. 

\subsection{Experimental Setup}
\label{subsec:exp_setup}

We used scenarios from the Motion Benchmarker~\cite{chamzas2021motionbenchmaker} and extended it with additional environments representing realistic industrial challenges such as narrow passages, cluttered workspaces, and confined reaching tasks.
The evaluation covers seven environments spanning manipulation and mobile manipulation (Fig.~\ref{fig:environments}).
We validate performance on two hardware configurations: a Franka Emika Panda (7-DOF) for fixed-base manipulation and a Ridgeback omni-directional base mounted with a UR10e arm (9-DOF total) for mobile manipulation.

For the general benchmark, we generated a dataset of 50--150 unique planning queries per scenario. 
Each specific query was solved 5 times by every planner to assess repeatability and performance variance.
In addition to standard benchmarking, we incorporated a specific robustness test to evaluate solution consistency. 
In this test, each base problem instance was perturbed 10 times by applying small random noise to valid start and goal states. 
A consistent and predictable planner should yield similar performance across these perturbations.

Experiments were conducted on an Intel Core i9-12900H laptop (64GB RAM, 5.2GHz). 
All algorithms were implemented in C++ using MoveIt!~\cite{moveit} and the Flexible Collision Library (FCL)~\cite{pan2012fcl} for validity checking.
A 5-second time limit was enforced for all runs.
Aggregated results are presented below, differentiated by task type (manipulation and mobile manipulation).
Detailed breakdowns per scenario and additional experiments and ablation studies are provided in the appendix and on the project website.

\subsection{Evaluation Metrics}
\label{subsec:eval_metrics}

We assess performance using four primary metrics: \textit{success rate}, \textit{cost}, \textit{consistency}, and \textit{planning time}.
Path cost is calculated as a weighted sum of length, velocity, and acceleration:
$$ \text{cost} = L + 0.1 \cdot V + 0.01 \cdot A $$
where $L$, $V$, and $A$ represent the cumulative joint-space distance, velocity magnitude, and acceleration magnitude, respectively.
This composite metric ensures a balance between trajectory efficiency and execution smoothness.

To measure consistency, we calculate the \textit{coefficient of variation} (CV) as a percentage. 
For a set of path costs $\{c_1, c_2, \ldots, c_n\}$ obtained from $n$ repetitions, the CV is the ratio of the standard deviation to the mean ($\text{CV} = \sigma / \mu$). 
Lower CV values indicate high consistency, whereas higher values reflect significant variance in solution quality.

Finally, we report the average \textit{planning time} across all trials and the \textit{success rate}, representing the percentage of queries successfully solved within the allowed planning time.

\subsection{Evaluated Algorithms}
\label{subsec:baselines}

We compare \mgs{} against 12 baselines spanning three paradigms.
\textbf{Sampling-based:} RRT~\cite{rrt}, RRT-Connect~\cite{RRT-Connect}, BiTRRT~\cite{bitrrt}, BiEST~\cite{BiEST}, PRM~\cite{PRM}, and RRT$^*$~\cite{RRTstar}, with shortcutting (OMPL implementation) and time parameterization as postprocessing.
We exclude MTRRT~\cite{mtrrt} from the comparison: we could not find an open-source implementation, and our own implementation failed to solve most of the benchmark problems in this experimental suite.
\textbf{Optimization-based:} CHOMP~\cite{ratliff2009chomp}, STOMP~\cite{kalakrishnan2011stomp}, and a hybrid RRT-Connect+CHOMP pipeline. CHOMP and STOMP were initialized with straight-line joint-space trajectories, while RRT-Connect+CHOMP used RRT-Connect to generate an initial feasible path that was subsequently refined by CHOMP. All optimization-based planners used the MoveIt! plugin implementations.
\textbf{Search-based:} Weighted-A$^*$ (wA$^*$), MHA$^*$~\cite{aine2016mhastar}, and wPA$^*$SE~\cite{pase}, using joint-space Euclidean and workspace 3D BFS heuristics, with shortcutting (as in~\cite{srmp}) and the same time parameterization as the sampling-based methods.
For all search-based planners, we used uniform-cost edges with bounded suboptimality $w=50$\footnote{The heuristic underestimates the cost-to-go and edges have unit cost; the weight $w$ scales the heuristic to match edge transition costs and further inflates it.}.
\mgs{} has the same heuristic settings as the search-based baselines, same edge costs and suboptimality bound, and a maximum of 10 subgraphs. 

\subsection{Results and Analysis}
\label{subsec:results_analysis}

\begin{figure}[h]
    \centering
    \includegraphics[width=0.49\columnwidth]{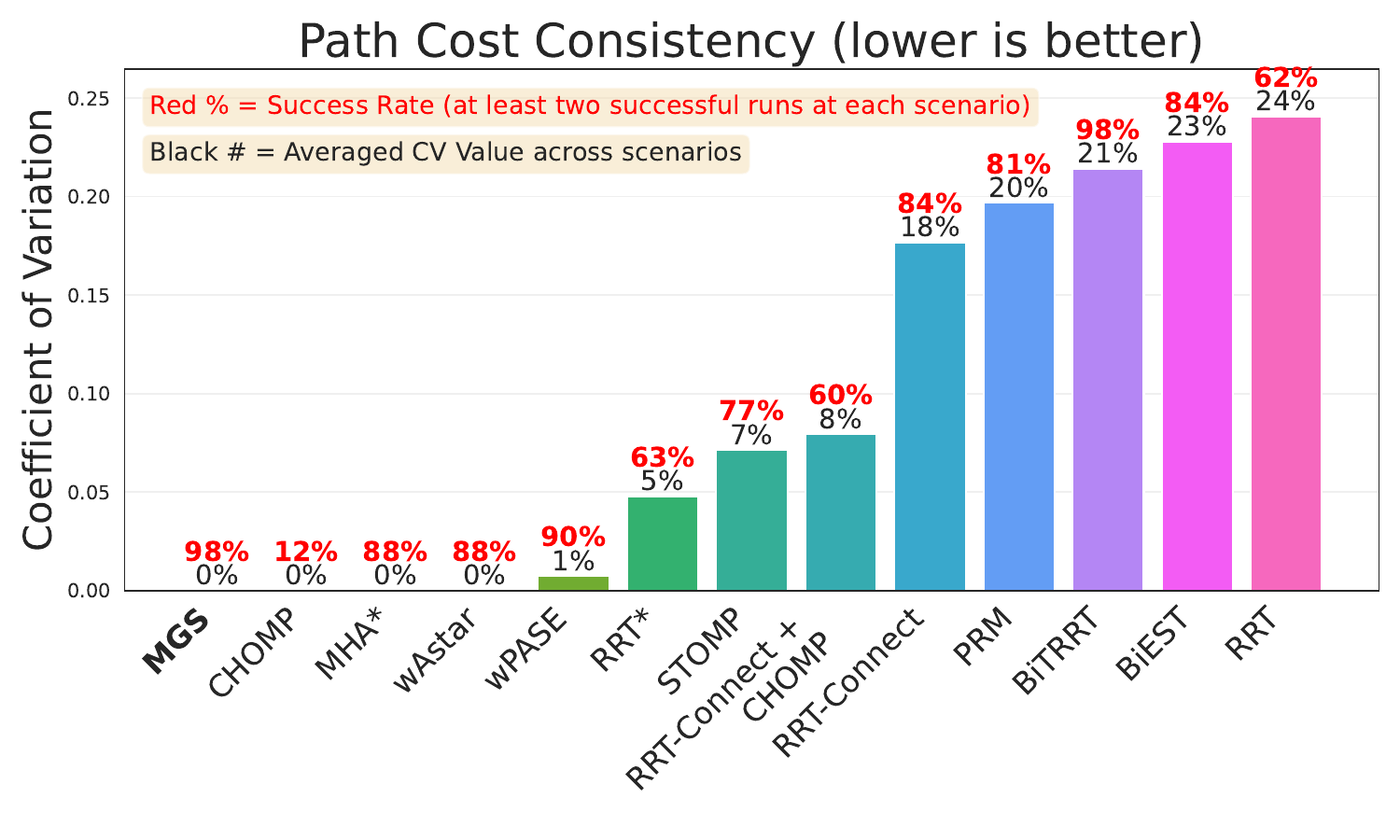}
    \includegraphics[width=0.49\columnwidth]{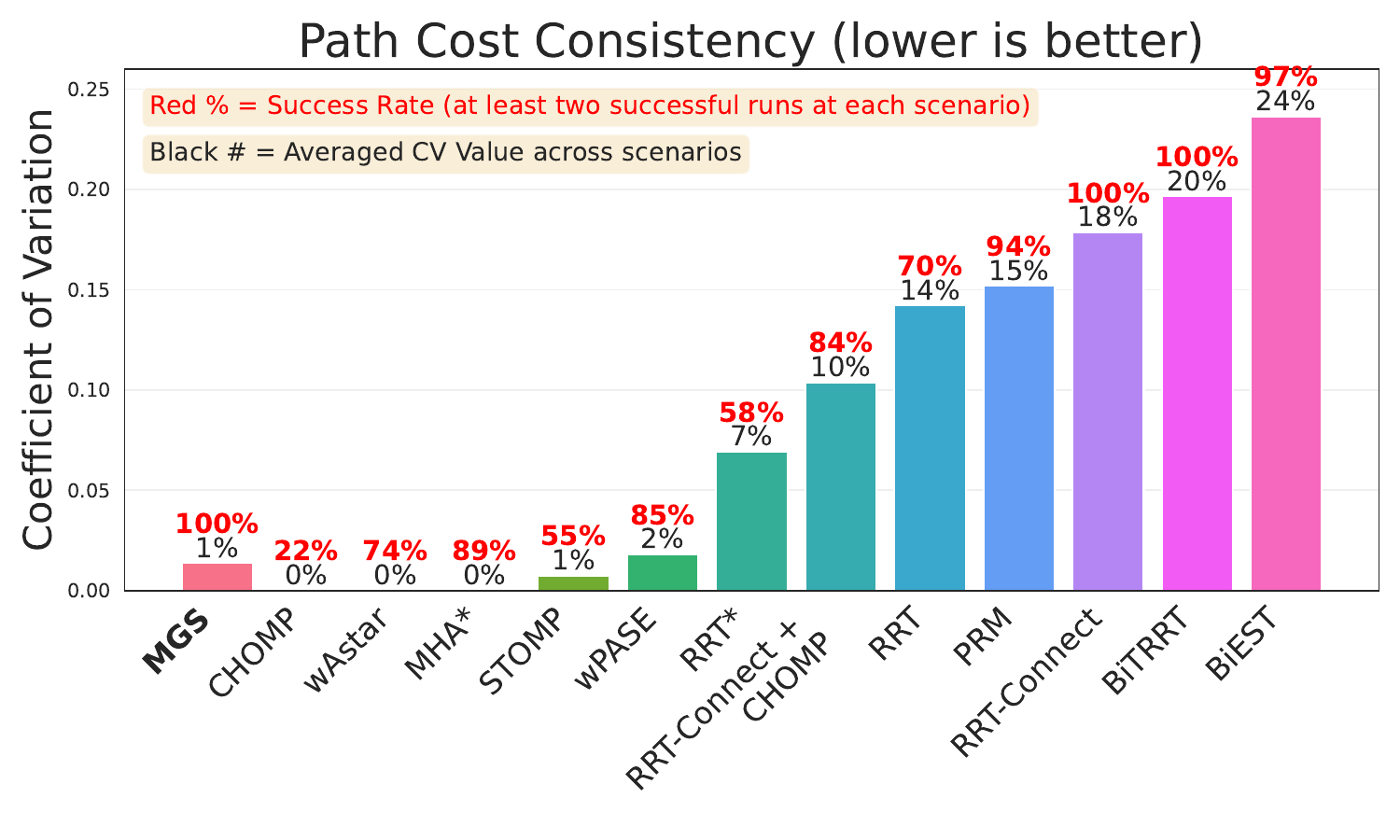}
    \caption{
        Path cost consistency for manipulation (left) and mobile manipulation (right) tasks. Each planner was executed 5 times per query; bars show the coefficient of variation (CV) of path costs across runs.
        }
    \label{fig:manipulation_consistency_results}
\end{figure}

\begin{figure}[h]
    \centering
    \includegraphics[width=0.99\columnwidth]{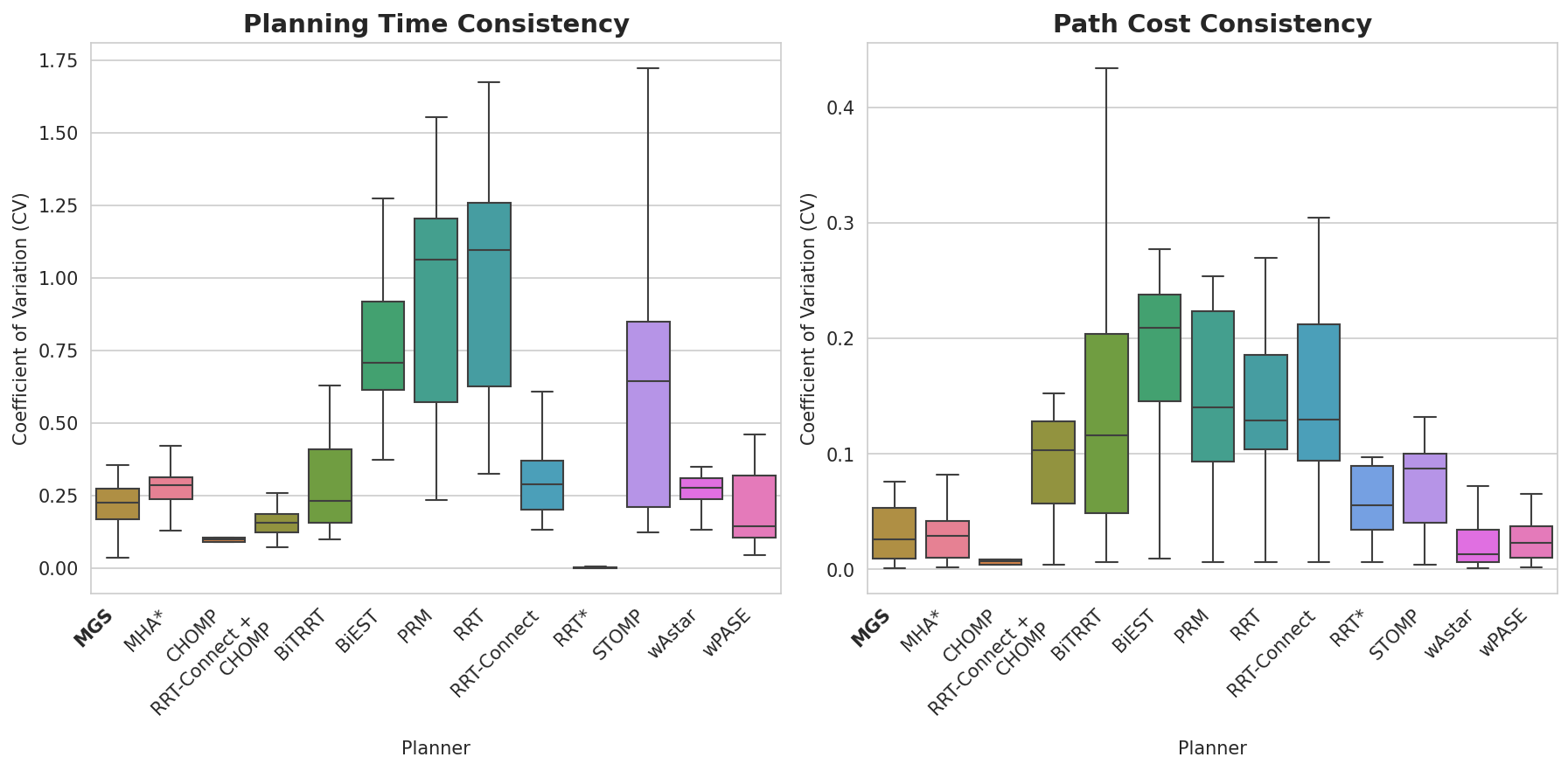}
    \includegraphics[width=0.99\columnwidth]{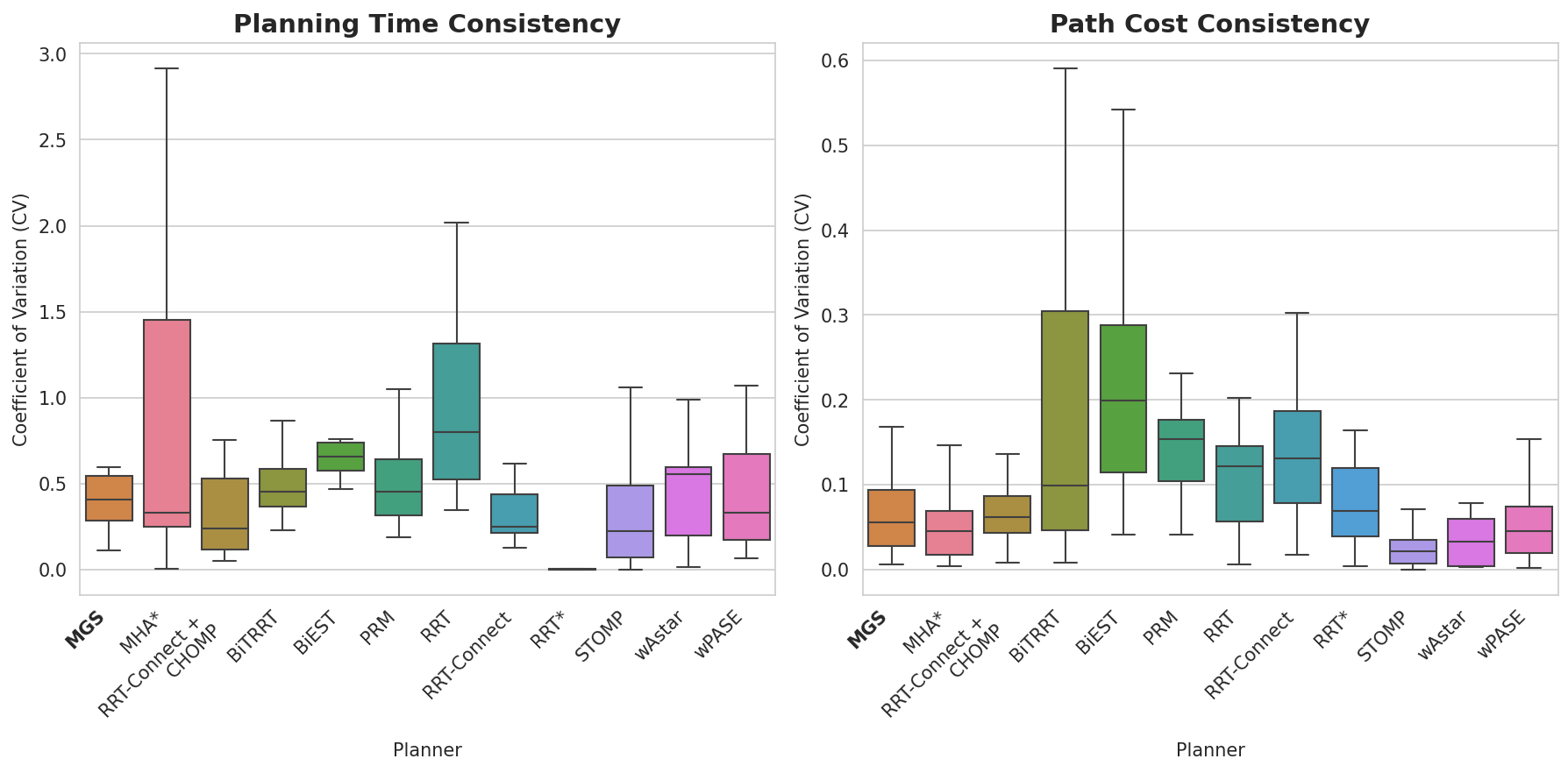}
    \caption{
        Consistency analysis under start/goal perturbations for manipulation (top) and mobile manipulation (bottom) tasks. 
        Each planner was tested on 10 perturbed versions of each base query; bars show the coefficient of variation (CV) of path costs and planning times across perturbations.
        }
    \label{fig:manipulation_consistency_results2}
\end{figure}

\begin{table*}[!ht]
\centering
\caption{
Experimental Results Summary: Aggregated performance metrics for manipulation and mobile manipulation tasks. 
Pairwise Relative Cost is the ratio of \mgs{}'s path cost to each baseline, computed only on queries where both planners succeeded.
Planning times are reported for successful runs and averaged over all runs (including timeouts).
}
\label{tab:results}
\resizebox{\textwidth}{!}{%
\begin{tabular}{|ccc|
>{\columncolor[HTML]{FFFFFF}}c |
>{\columncolor[HTML]{FFFFFF}}c |
>{\columncolor[HTML]{FFFFFF}}c |
>{\columncolor[HTML]{FFFFFF}}c |
>{\columncolor[HTML]{FFFFFF}}c |
>{\columncolor[HTML]{FFFFFF}}c |
>{\columncolor[HTML]{FFFFFF}}c |
>{\columncolor[HTML]{FFFFFF}}c |
>{\columncolor[HTML]{FFFFFF}}c |
>{\columncolor[HTML]{FFFFFF}}c |
>{\columncolor[HTML]{FFFFFF}}c |
>{\columncolor[HTML]{FFFFFF}}c |
>{\columncolor[HTML]{FFFFFF}}c |}
\hline
\multicolumn{3}{|c|}{} &
  \cellcolor[HTML]{C0C0C0}\textsc{Mgs} &
  \cellcolor[HTML]{C0C0C0}WA* &
  \cellcolor[HTML]{C0C0C0}MHA* &
  \cellcolor[HTML]{C0C0C0}wPASE &
  \cellcolor[HTML]{C0C0C0}RRT &
  \cellcolor[HTML]{C0C0C0}PRM &
  \cellcolor[HTML]{C0C0C0}RRT-Connect &
  \cellcolor[HTML]{C0C0C0}BiTRRT &
  \cellcolor[HTML]{C0C0C0}BiEST &
  \cellcolor[HTML]{C0C0C0}CHOMP &
  \cellcolor[HTML]{C0C0C0}STOMP &
  \cellcolor[HTML]{C0C0C0}\begin{tabular}[c]{@{}c@{}}RRT-Connect + \\ CHOMP\end{tabular} &
  \cellcolor[HTML]{C0C0C0}RRT* \\ \hline
\multicolumn{1}{|c|}{\cellcolor[HTML]{C0C0C0}} &
  \multicolumn{2}{c|}{Success Rate {[}\%{]}} &
  98.2 &
  87.5 &
  87.5 &
  90.4 &
  65.7 &
  81.4 &
  86.8 &
  97.5 &
  83.6 &
  12.5 &
  77.1 &
  65.7 &
  68.2 \\ \cline{2-16} 
\multicolumn{1}{|c|}{\cellcolor[HTML]{C0C0C0}} &
  \multicolumn{2}{c|}{Pairwise Relative Cost [\textsc{Mgs} / \textit{planner}]} &
  - &
  0.98 &
  0.99 &
  0.99 &
  0.74 &
  0.74 &
  0.78 &
  0.81 &
  0.77 &
  0.47 &
  0.59 &
  0.47 &
  1.11 \\ \cline{2-16} 
\multicolumn{1}{|c|}{\cellcolor[HTML]{C0C0C0}} &
  \multicolumn{1}{c|}{} &
  Successful Runs &
  0.41 &
  0.26 &
  0.14 &
  0.19 &
  0.86 &
  0.23 &
  0.37 &
  0.39 &
  0.48 &
  0.09 &
  0.25 &
  0.14 &
  5 \\ \cline{3-16} 
\multicolumn{1}{|c|}{\multirow{-4}{*}{\cellcolor[HTML]{C0C0C0}Manipulation}} &
  \multicolumn{1}{c|}{\multirow{-2}{*}{Planning Time {[}sec{]}}} &
  All Runs (including time-limit) &
  0.49 &
  1.18 &
  1.15 &
  0.94 &
  2.61 &
  1.62 &
  1.27 &
  0.56 &
  1.64 &
  4.43 &
  1.75 &
  2.19 &
  5 \\ \hline
\multicolumn{1}{|c|}{\cellcolor[HTML]{C0C0C0}} &
  \multicolumn{2}{c|}{Success Rate {[}\%{]}} &
  100 &
  74.4 &
  88.5 &
  84.9 &
  74.5 &
  94.8 &
  100 &
  100 &
  97.7 &
  22.4 &
  55.7 &
  84.6 &
  63.3 \\ \cline{2-16} 
\multicolumn{1}{|c|}{\cellcolor[HTML]{C0C0C0}} &
  \multicolumn{2}{c|}{Pairwise Relative Cost [\textsc{Mgs} / \textit{planner}]} &
  - &
  1.07 &
  1.10 &
  1.10 &
  0.82 &
  0.77 &
  0.80 &
  0.88 &
  0.72 &
  0.66 &
  0.98 &
  0.72 &
  0.90 \\ \cline{2-16} 
\multicolumn{1}{|c|}{\cellcolor[HTML]{C0C0C0}} &
  \multicolumn{1}{c|}{} &
  Successful Runs &
  0.20 &
  0.20 &
  0.32 &
  0.37 &
  0.39 &
  0.35 &
  0.11 &
  0.21 &
  0.24 &
  0.18 &
  0.49 &
  0.24 &
  5.00 \\ \cline{3-16}
\multicolumn{1}{|c|}{\multirow{-4}{*}{\cellcolor[HTML]{C0C0C0}Mobile Manipulation}} &
  \multicolumn{1}{c|}{\multirow{-2}{*}{Planning Time {[}sec{]}}} &
  All Runs (including time-limit) &
  0.20 &
  1.59 &
  0.86 &
  1.16 &
  1.48 &
  0.58 &
  0.11 &
  0.21 &
  0.46 &
  4.14 &
  2.75 &
  1.03 &
  5.00 \\ \hline
\end{tabular}%
}
\end{table*}

Table~\ref{tab:results} summarizes the aggregated results across all manipulation and mobile manipulation tasks.
\mgs{} consistently attains high success rates while producing low-cost solutions.
Among sampling-based methods, BiTRRT achieves comparable reliability but at noticeably higher path costs.
Search-based planners achieve similar solution quality to \mgs{} but suffer from lower success rates, particularly in mobile manipulation where heuristic guidance becomes less effective in higher-dimensional spaces.
Optimization-based approaches (CHOMP, STOMP) show the lowest reliability due to frequent convergence to infeasible local minima in cluttered environments.
The only method with lower costs than \mgs{} on successful queries is RRT$^*$, which is expected given its asymptotic optimality guarantee; however, this comes at the expense of significantly lower success rates and longer planning times.

Figures~\ref{fig:manipulation_consistency_results} and~\ref{fig:manipulation_consistency_results2} present the consistency analysis across repeated runs and under start/goal perturbations.
\mgs{} exhibits low coefficients of variation (CV) in path costs, indicating stable and predictable performance.
This consistency stems from the deterministic nature of the search-based anchor combined with structured root selection.
In mobile manipulation, slightly higher variability is observed, as the differential IK used for root selection (Section~\ref{subsec:root_selection}) can produce different configurations depending on the seed, leading to occasional variation in the resulting roots.
In contrast, sampling-based planners show higher variability due to stochastic exploration, where small perturbations in start/goal can lead to substantially different random trees.

\mgs{}'s performance depends on the quality and on the number of roots: too few roots can limit the benefit of multi-directional, 
while too many dilute the expansion budget across subgraphs that may not contribute to the solution.
Additional analysis of these sensitivities is provided in the appendix.
Overall, \mgs{} effectively bridges the gap between sampling-based and search-based paradigms---achieving the scalability of the former while maintaining the solution quality and consistency of the latter.

\section{Conclusion, Limitations, and Future Work}

We presented \mgs{}, a multi-graph search framework for motion planning that departs from the traditional unidirectional and bidirectional search paradigm.
By maintaining multiple subgraphs anchored at strategically chosen root configurations, \mgs{} focuses exploration on high-potential regions in the state space.
\mgs{} retains the completeness and bounded-suboptimality guarantees of classical search-based planners, while our experiments demonstrate that it achieves substantial improvements in planning efficiency.
While effective, the current approach has limitations that point to exciting future directions.
The root selection strategy reasons in end-effector workspace, implicitly assuming that the end-effector position is a sufficient proxy for identifying important regions in configuration-space, and tends to produce roots that are close to obstacle boundaries.
For tasks with different constraints---such as preferring greater clearance---the workspace BFS may miss critical regions.
Since \mgs{} is agnostic to the root selection strategy, these limitations can be addressed by plugging in alternative strategies---such as curating libraries of kinematically favorable configurations, reasoning about the full robot body, or learning root placement policies from data.
Dynamically selecting which subgraphs to expand during search is another natural extension.
More broadly, integrating learned motion primitives or manipulation skills as subgraph components could extend \mgs{} beyond collision-free planning to contact-rich tasks, and exploiting the multi-graph structure for parallelization could yield further speedups.


\bibliographystyle{IEEEtran}
\bibliography{references}

\appendix

We first give additional algorithmic details, including pseudocode, then discuss attractors and their geometric role in guiding multi-graph search, and finally summarize implementation details.
We also report additional experimental results: per-scenario breakdowns of Section~\ref{sec:experiments} and ablations on the time-limit and the maximum number of sub-graphs parameters.
Finally, we present an additional domain demonstrating the generalization of \mgs{}, discuss the impact of joint limits relative to baselines, and analyze failure cases.

\subsection{Algorithmic Details}
\label{subsec:appendix_algo}

We present pseudocode for \textsc{MergeSubGraphs} (Algorithm~\ref{alg:merge}) and summarize \textsc{TryToConnect}, \textsc{GetConnectingPath}, and \textsc{ChooseMergingOrder} below.

\textsc{TryToConnect} checks whether a state $q$, expanded in a sub-graph, can be connected to states that already belong to other sub-graphs.
Any method that solves the two-point boundary value problem between $q$ and states in the target sub-graphs can be used.
Here we use linear interpolation in configuration space: for each nearest neighbor $q'$ on the frontier of a target sub-graph $G_j$, the interpolated path from $q$ to $q'$ is collision-checked.
If the path is collision-free, the connection $(q', G_j)$ is recorded.
\textsc{GetConnectingPath} returns the corresponding collision-free interpolated path.

\begin{algorithm}[h]
    \SetAlgoVlined
    \SetKwFunction{OPEN}{OPEN}
    \SetKwFunction{CLOSED}{CLOSED}
    \SetKwFunction{FOCAL}{FOCAL}
    \SetKwFunction{INSERT}{Insert}
    \SetKwFunction{AddEdge}{AddEdge}

    \SetKwInput{KwIn}{\textsc{Input}}
    \SetKwInput{KwOut}{\textsc{Output}}

    \scriptsize
    \caption{\textsc{MergeSubGraphs}}
    \label{alg:merge}
    \KwIn{Receiving sub-graph $G_{\text{from}}$, merged sub-graph $G_{\text{to}}$, merge point $q_{\text{merge}} \in V_{\text{to}}$, sub-graph collection $\mathcal{G}$}
    \KwOut{Updated collection $\mathcal{G}$}
    \vspace{4pt}

    \tcp{BFS from merge point: propagate g-values and transfer edges/states}
    $Q \gets \{q_{\text{merge}}\}$
    $\text{visited} \gets \{q_{\text{merge}}\}$

    \While{$Q \neq \emptyset$}{
        $s \gets Q.\text{dequeue}()$

        \ForEach{edge $(s, s', c(s,s'))$ from $s$ in $G_{\text{to}}$}{ \label{line:edges}

            \If{$s' \notin \text{visited}$}{
                $g(s') \gets G_{from}.getGvalue(s) + c(s, s')$ \tcp*[r]{\color{orange} \scriptsize Propagate from $G_{\text{to}}$'s g-values}

                $f(s') \gets g(s') + h(s')$

                $\text{visited} \gets \text{visited} \cup \{s'\}$

                \textsc{AddStateTo}($G_{\text{from}}$, $s'$)

                $Q.\text{enqueue}(s')$
            }
            $G_{\text{from}}$.\AddEdge{$(s, s', c(s,s'))$} 

        }
    }

    $\mathcal{G} \gets \mathcal{G} \setminus \{G_{\text{to}}\}$

    \KwRet $\mathcal{G}$

    \vspace{4pt}
    \SetKwProg{Fn}{Function}{:}{}
    \Fn{\textsc{AddStateTo}($G$, $s$)}{ \label{line:anchor_check}
        \If{$s \in V_G$}{ \label{line:dup_check}
            \KwRet
        }
        \eIf{$G = G_1$}{
            \tcp{Anchor merge: state enters \texttt{OPEN} for re-expansion}
            $G$.\OPEN{}.\INSERT{$s$}

            \If{$f(s) \leq \epsilon \cdot f_{\min}$}{
                $G$.\FOCAL{}.\INSERT{$s$}
            }
        }{
            \tcp{Connect-connect merge: preserve closed status}
            \eIf{$s \in G_{\text{to}}.\CLOSED()$}{
                $G$.\CLOSED{}.\INSERT{$s$}
            }{
                $G$.\OPEN{}.\INSERT{$s$}
            }
        }
    }
\end{algorithm}

\textsc{MergeSubGraphs} (Algorithm~\ref{alg:merge}) transfers all states and edges from $G_{\text{to}}$ into $G_{\text{from}}$.
Starting from the merge point $q_{\text{merge}}$, a BFS traverses $G_{\text{to}}$ using the stored edges and their costs (Line~\ref{line:edges}).
For each state $s'$ adjacent to the current state $s$, the g-value is computed using the g-value of $s$ in $G_{\text{from}}$ plus the edge cost.
Each edge is transferred to $G_{\text{from}}$, and each newly visited state is added via \textsc{AddStateTo} (Line~\ref{line:anchor_check}) if not already present (Line~\ref{line:dup_check}).
When merging into the anchor $G_1$, all states are inserted into \texttt{OPEN} to allow re-expansion under the anchor's heuristic ordering.
When merging two connect searches, closed states remain closed, avoiding redundant expansions until the eventual merge into the anchor.

\textsc{ChooseMergingOrder}($G_i$, $G_j$, $\mathcal{G}$) determines which sub-graph absorbs the other during a connect-connect merge.
If either sub-graph is the anchor $G_1$, the anchor always receives.
Otherwise, the sub-graph whose root has a smaller admissible heuristic estimate $h(r, q_{\text{start}})$---estimated to be closer to the start---is designated as $G_{\text{from}}$, as it is more likely to merge into the anchor sooner.

\subsection{Backward and Forward Attractors}
\label{subsec:appendix_attractors}

\begin{figure*}[t]
    \centering
    \begin{subfigure}[b]{0.48\textwidth}
        \centering
        \includegraphics[width=\columnwidth]{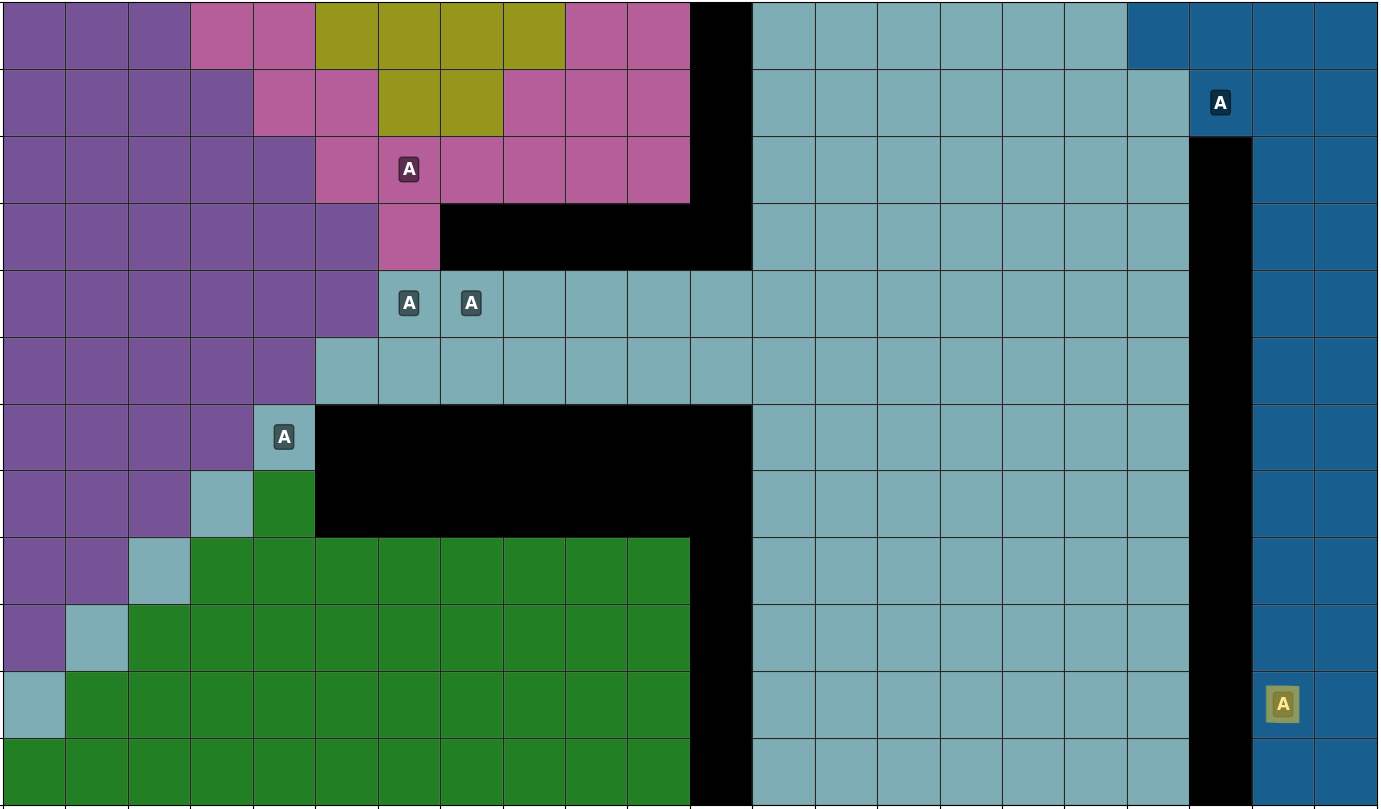}
        \caption{Backward attractors from the goal.}
        \label{fig:attractors_back}
    \end{subfigure}
    \hfill
    \begin{subfigure}[b]{0.48\textwidth}
        \centering
        \includegraphics[width=\columnwidth]{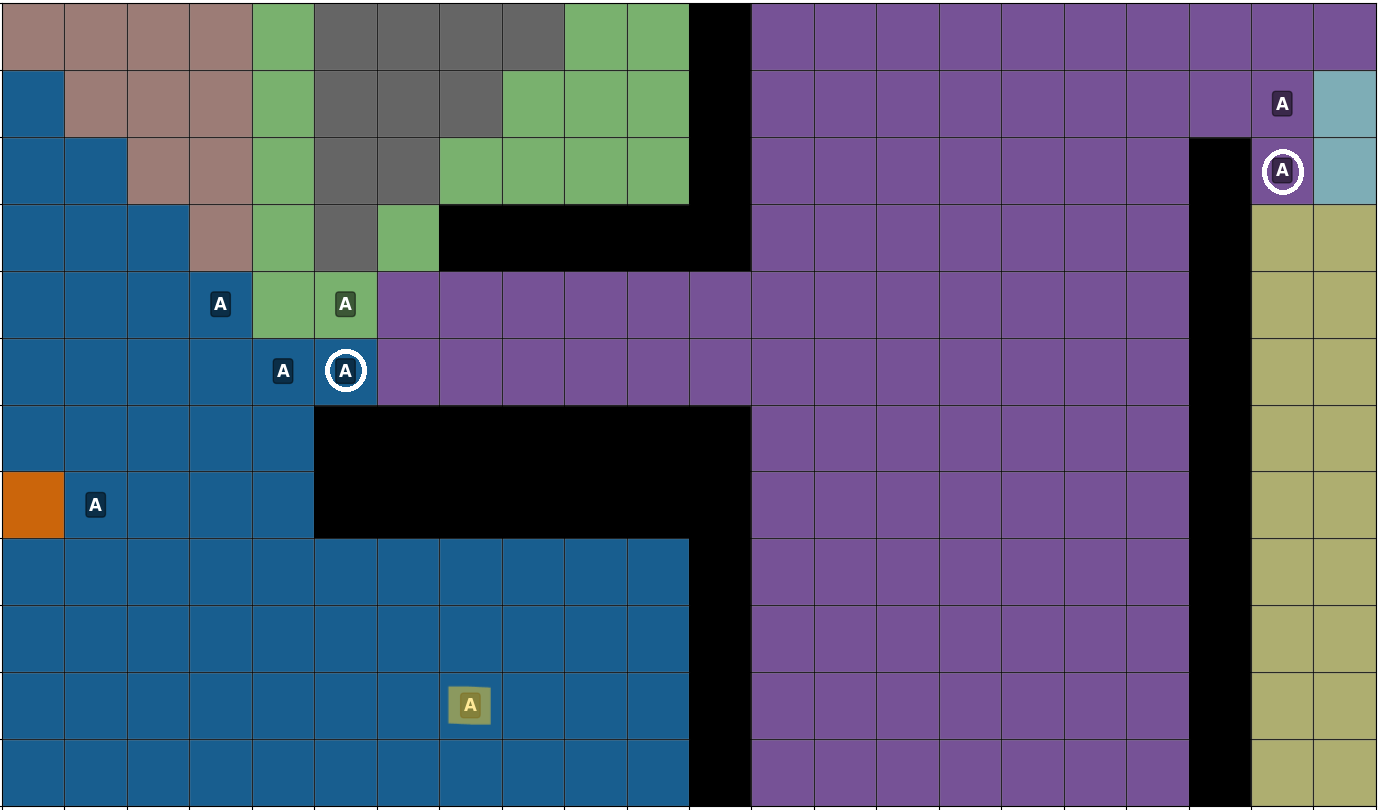}
        \caption{Forward attractors from the start.}
        \label{fig:attractors_front}
    \end{subfigure}
    \caption{Comparison of backward and forward attractor discovery. Backward (left) attractors are often generated facing away from the goal, while forward (right) attractors face toward the goal. The attractors discovered by the forward policy rollout (not the full BFS) are marked with a white circle. The root of the search is highlighted in yellow.}
    \label{fig:attractors_back_front}
\end{figure*}

The root selection procedure (Algorithm~\ref{alg:root_selection}) identifies attractor states in the end-effector workspace through backward BFS from the goal position.
Here we provide further intuition on the geometric role of these attractors and the distinction between backward and forward attractors.
As the BFS wavefront propagates outward from the goal, it encounters obstacles and flows around them.
An attractor emerges at a location $w$ where the greedy path from $w$ back toward the goal must diverge due to an obstacle: the greedy predecessor of $w$'s neighbor differs from the BFS parent (Algorithm~\ref{alg:root_selection}, Line~\ref{line:greedy_pred}), indicating that the obstacle geometry forces a detour.
Geometrically, backward attractors tend to form on the side of obstacles \emph{facing away from the goal}, effectively ``hugging'' the obstacle from the side opposite the BFS root.
Each attractor thus marks the entrance to a region that is not directly reachable from the goal without navigating around an obstacle, making it a natural candidate for a search root that can explore that region locally.

While backward attractors provide broad coverage of obstacle boundaries, they may be hard to connect to in full configuration space.
To ensure coverage along the actual start-to-goal corridor, Algorithm~\ref{alg:root_selection} traces the BFS policy \emph{forward} from the start end-effector position (Line~\ref{line:forward_attractors}), collecting the attractors encountered along this path.
This forward rollout ensures obstacle sides are represented with respect to \emph{both} the start and the goal, yielding roots that are both broadly placed near critical boundaries and concentrated along the relevant corridor (Fig.~\ref{fig:attractors_back_front}).

\subsection{Implementation Details}
\label{subsec:appendix_impl}

We describe additional implementation choices not fully specified in the main text.

\noindent\textbf{Motion Primitives and Discretization.}
The implicit graph is constructed using single-joint motion primitives: for each joint $j \in \{1, \ldots, d\}$, the primitive applies a displacement of $\pm \Delta\theta_j$ to joint $j$ while holding all other joints fixed.
We use adaptive motion primitive---long and short---where if the current state is within a certain distance threshold of the goal or start, we use both short and long primitives, otherwise only long primitives.
All edges have uniform cost, consistent with the search-based baselines described in Section~\ref{subsec:baselines}.
The joint resolution $\Delta\theta_j$ may vary per joint---typically finer for wrist joints and coarser for shoulder or base joints---and is set to match the SRMP framework~\cite{srmp} used in our experiments with a resolution of 1 degree for revolute joints and 1 cm for prismatic joints.
Please note that while we discretize for planning, the resulting solutions are in continuous space and the goal reached is accurate (the state datastructure contains both the discretized configuration and the underlying continuous configuration).

\noindent\textbf{Root Selection.}
The end-effector workspace is discretized into a 3D occupancy grid, where each voxel is marked as occupied if it overlaps with any obstacle geometry.
We use a 2 cm voxel size for manipulation and a 5 cm voxel size for mobile manipulation, and inflate obstacles by 5 cm based on the gripper geometry.
This provides conservative clearance during workspace reasoning: the signed distance field (SDF) derived from the occupancy grid is inflated around the end-effector, ensuring that identified attractors maintain a margin from obstacle surfaces.
For mapping workspace attractors to configuration space, we use differential inverse kinematics seeded with the configuration of the previously mapped attractor (starting from the start configuration).
If the IK solver fails to converge or returns a configuration in collision, the attractor is discarded.
When the number of attractors exceeds the sub-graph budget $m$, $k$-means clustering is applied in workspace coordinates, and the attractor closest to each cluster centroid is selected as the representative root.

\noindent\textbf{Sub-graph Connections.}
For \textsc{TryToConnect}, we use a single nearest neighbor ($k=1$) on the frontier of each target sub-graph when attempting connections.

\begin{figure*}[t]
    \centering
    \begin{subfigure}[b]{0.32\textwidth}
        \centering
        \includegraphics[width=\columnwidth]{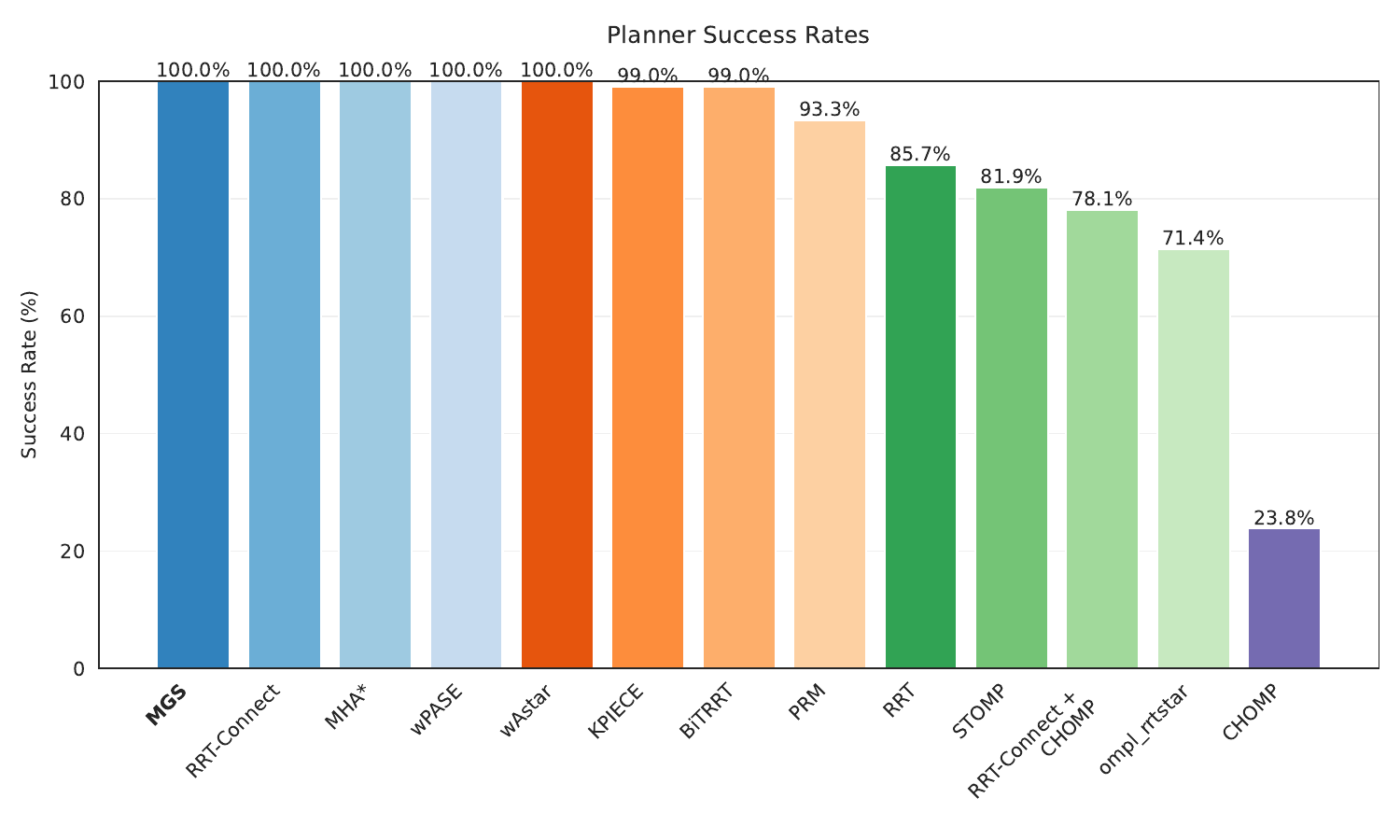}
        \caption{Shelf pick-and-place (manip.).}
    \end{subfigure}
    \hfill
    \begin{subfigure}[b]{0.32\textwidth}
        \centering
        \includegraphics[width=\columnwidth]{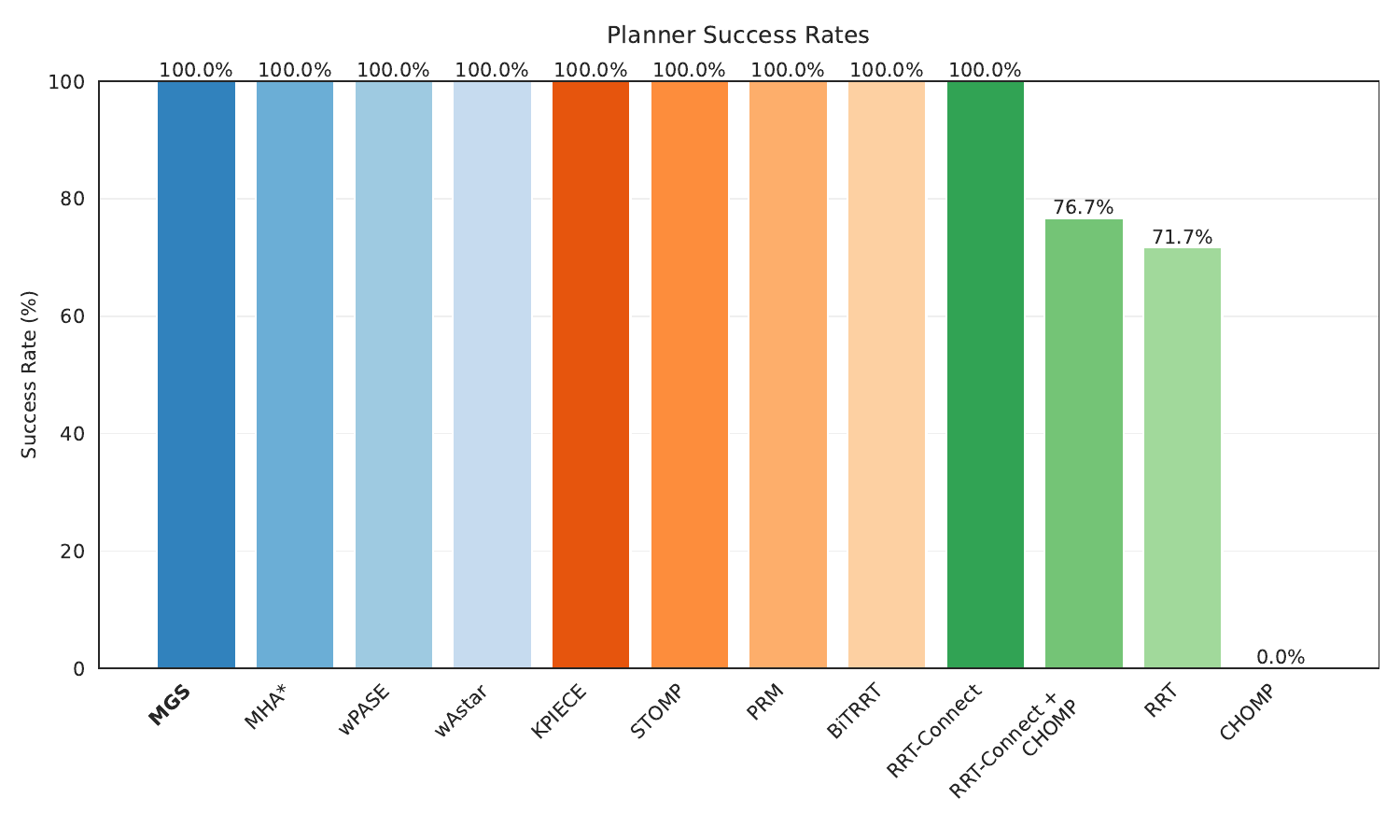}
        \caption{Bin picking (manip.).}
    \end{subfigure}
    \hfill
    \begin{subfigure}[b]{0.32\textwidth}
        \centering
        \includegraphics[width=\columnwidth]{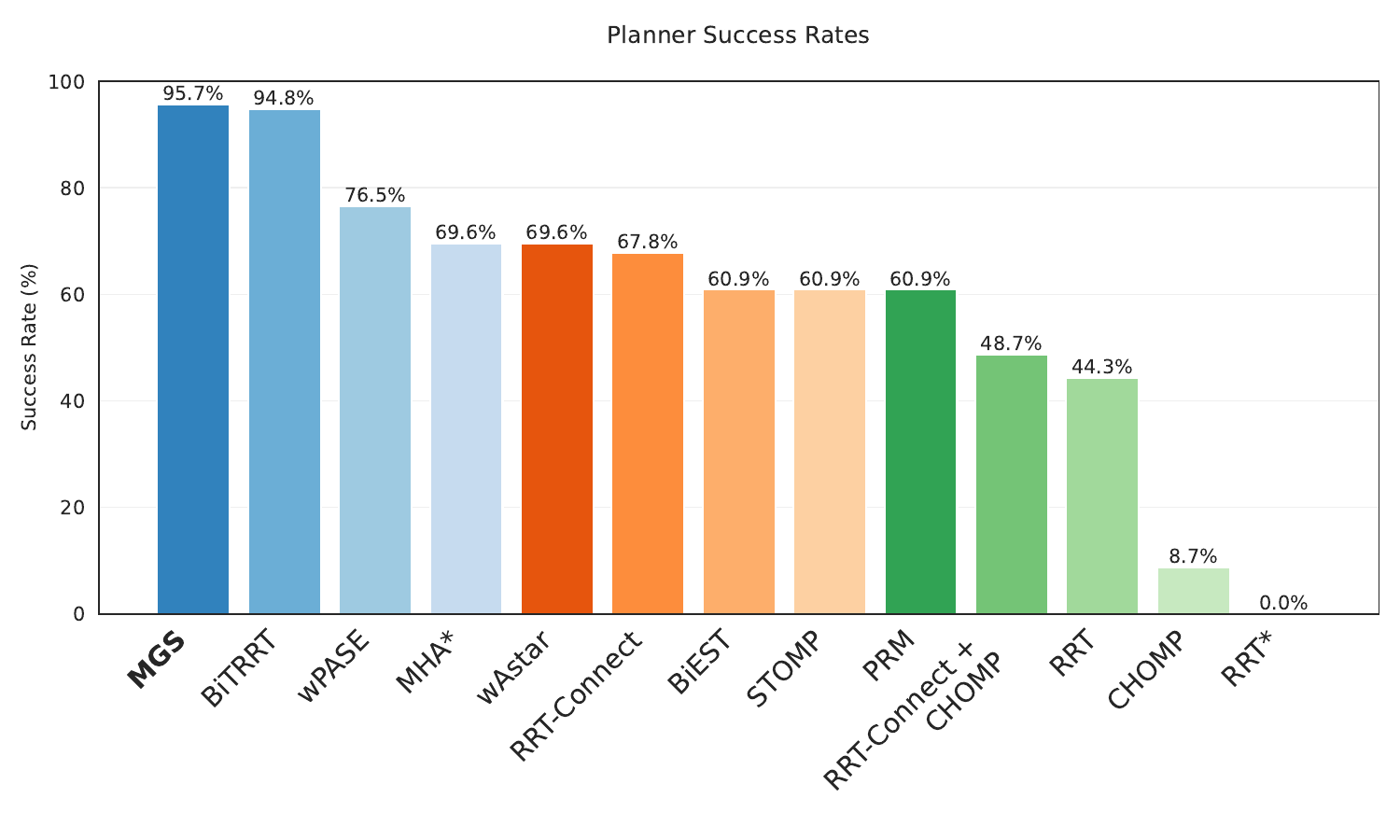}
        \caption{Cage extraction (manip.).}
    \end{subfigure}
    \\[0.5em]
    \begin{subfigure}[b]{0.24\textwidth}
        \centering
        \includegraphics[width=\columnwidth]{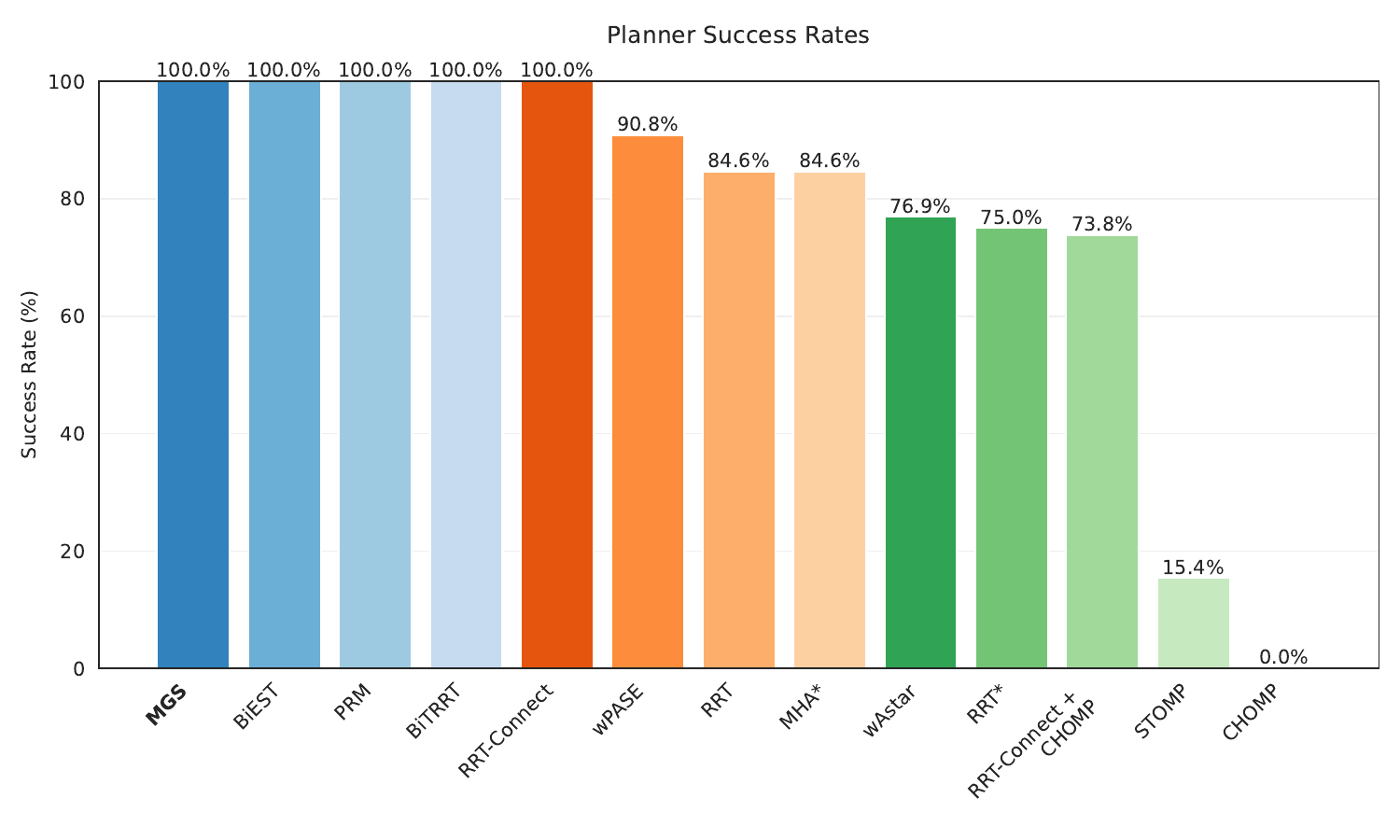}
        \caption{Low-clearance (mobile).}
    \end{subfigure}
    \hfill
    \begin{subfigure}[b]{0.24\textwidth}
        \centering
        \includegraphics[width=\columnwidth]{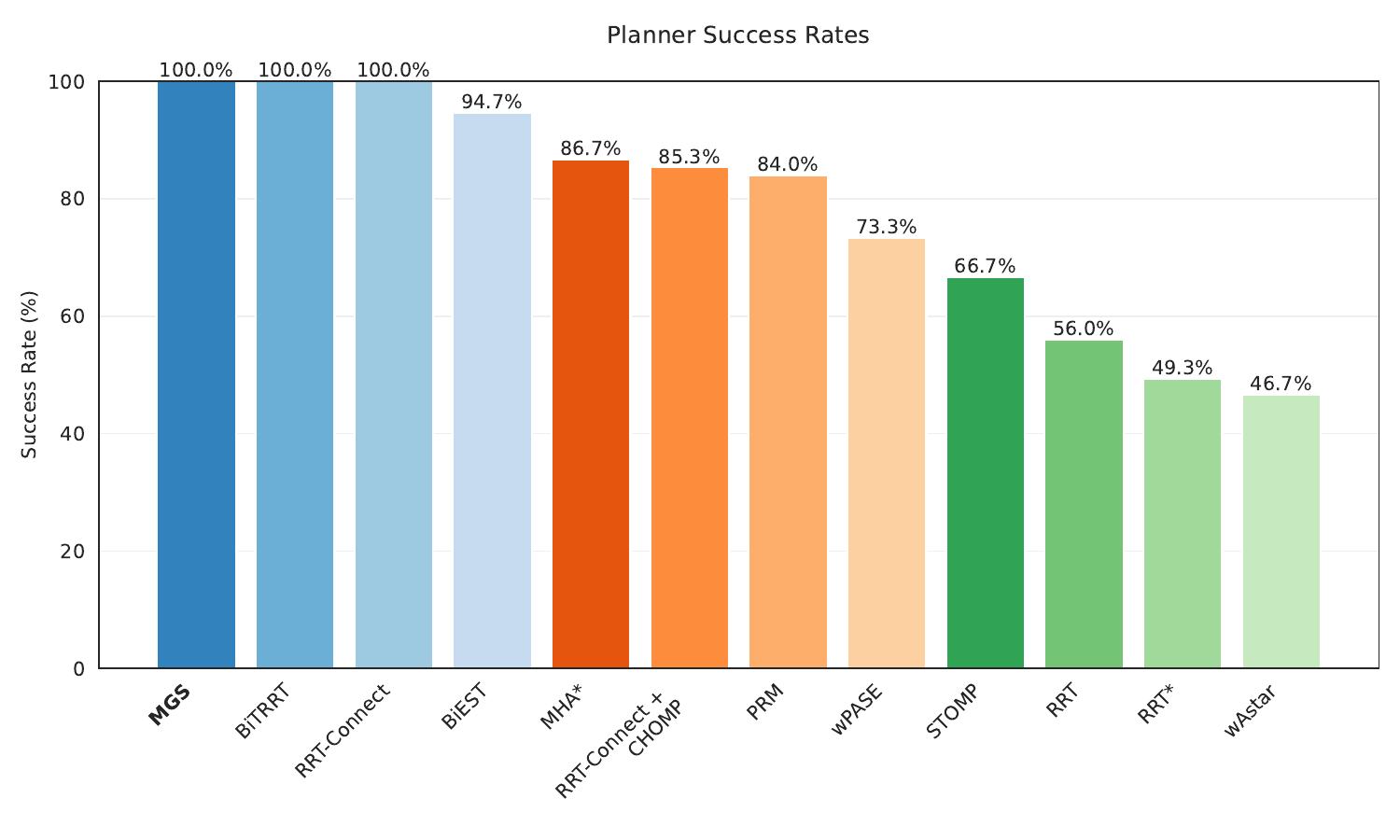}
        \caption{Deep shelf (mobile).}
    \end{subfigure}
    \hfill
    \begin{subfigure}[b]{0.24\textwidth}
        \centering
        \includegraphics[width=\columnwidth]{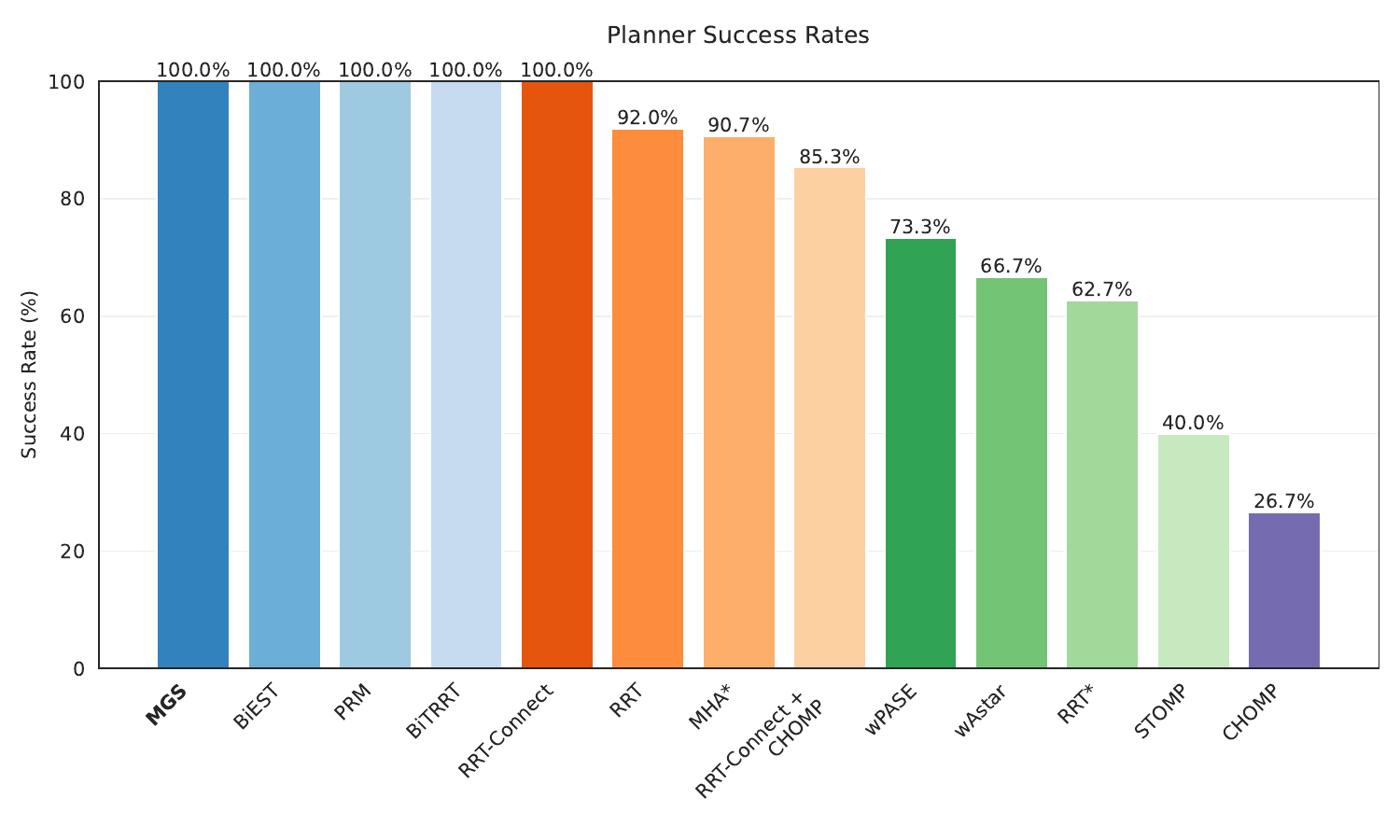}
        \caption{Cluttered table (mobile).}
    \end{subfigure}
    \hfill
    \begin{subfigure}[b]{0.24\textwidth}
        \centering
        \includegraphics[width=\columnwidth]{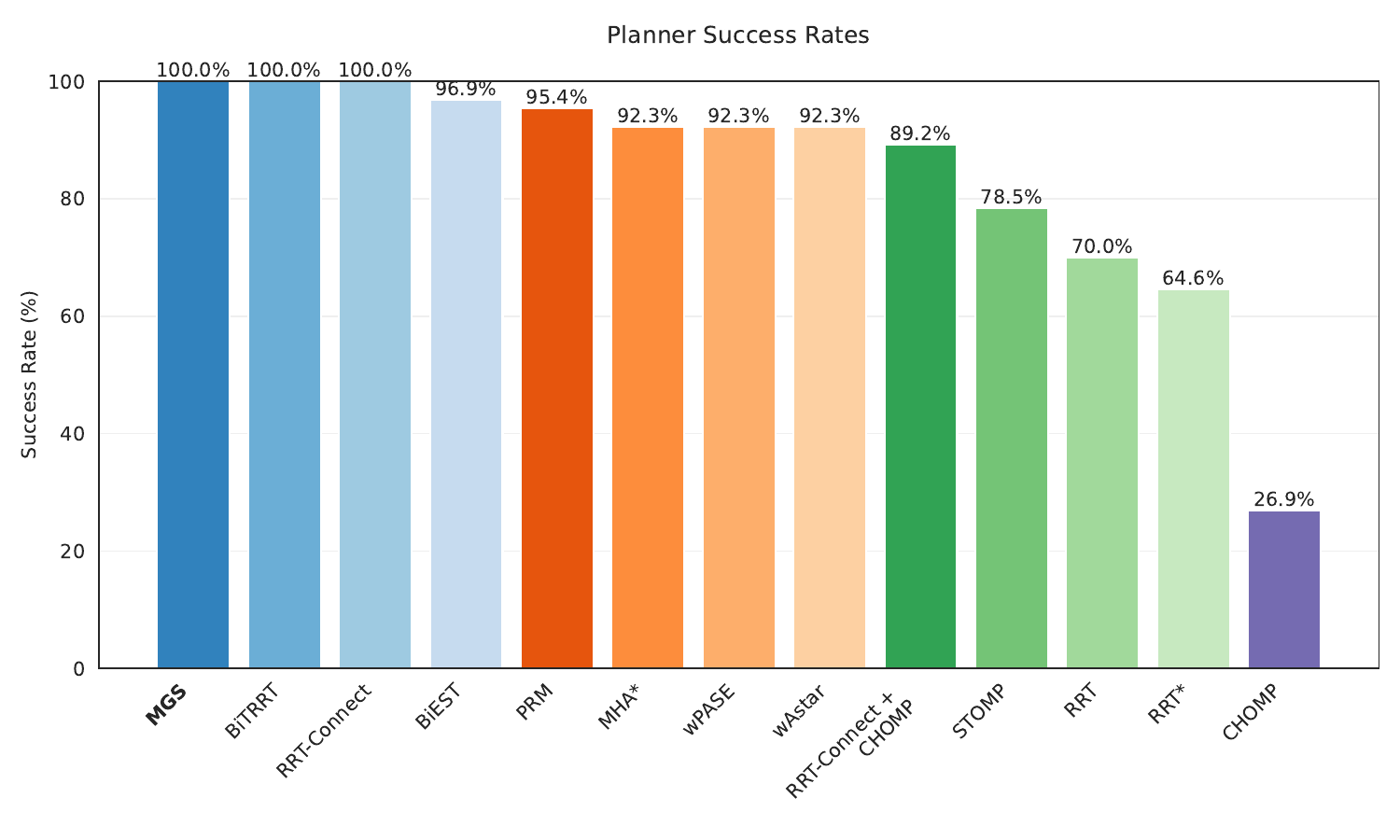}
        \caption{Warehouse (mobile).}
    \end{subfigure}
    \caption{Success rates for all scenarios. Manipulation scenarios (a--c) use a 7-DOF Franka Panda; mobile manipulation scenarios (d--g) use a 9-DOF Ridgeback + UR10e. Each bar represents the percentage of queries solved within the 5-second time limit.}
    \label{fig:per_scenario_succ}
\end{figure*}

\subsection{Per-Scenario Experimental Results}
\label{subsec:appendix_per_scenario}

We provide detailed per-scenario breakdowns of the experimental results summarized in Section~\ref{sec:experiments}.
We show success rates for each scenario (Fig.~\ref{fig:per_scenario_succ}) and pairwise cost comparison matrices (Figs.~\ref{fig:per_scenario_conf_manip} and~\ref{fig:per_scenario_conf_mobile}) for manipulation and mobile manipulation scenarios, respectively.

\subsection{Time Limit Ablation}
\label{subsec:appendix_time_limit}

\begin{figure}[t]
    \centering
    \begin{subfigure}{\columnwidth}
        \centering
        \includegraphics[width=\columnwidth]{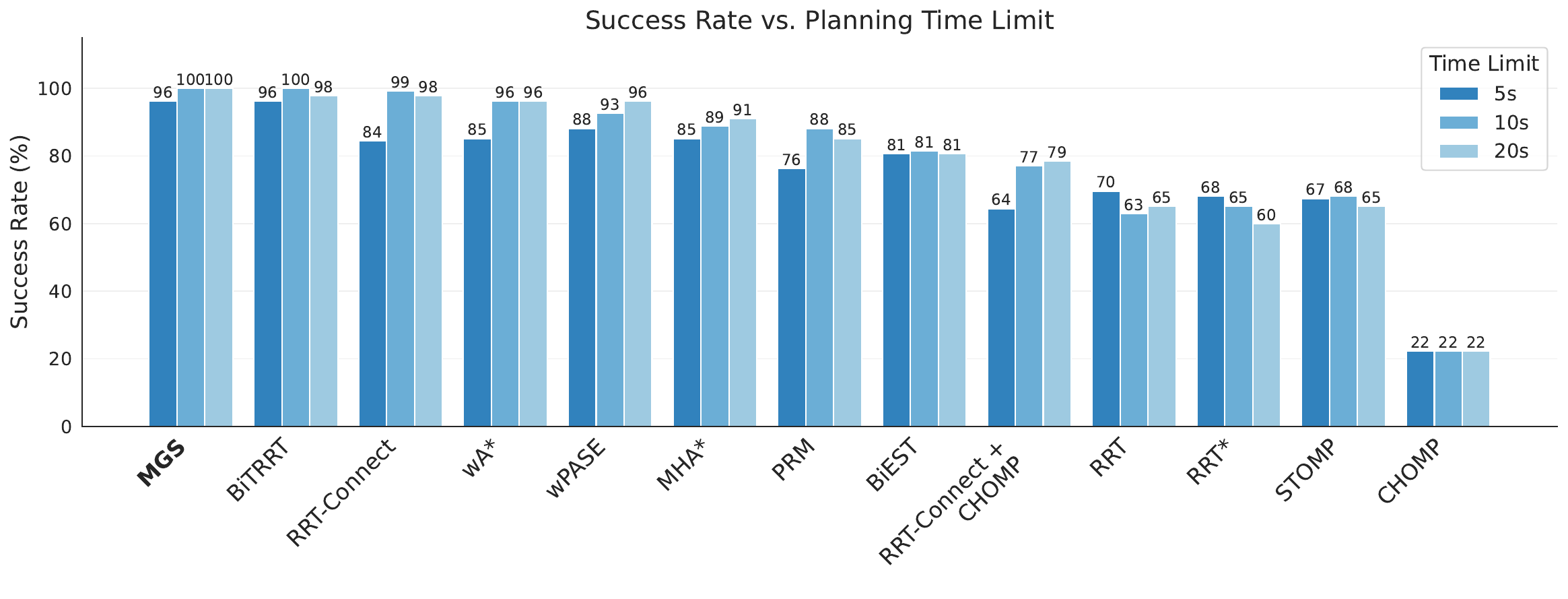}
        \caption{Success rates for varying time limits.}
    \end{subfigure}
    \\[0.5em]
    \begin{subfigure}{\columnwidth}
        \centering
        \includegraphics[width=\columnwidth]{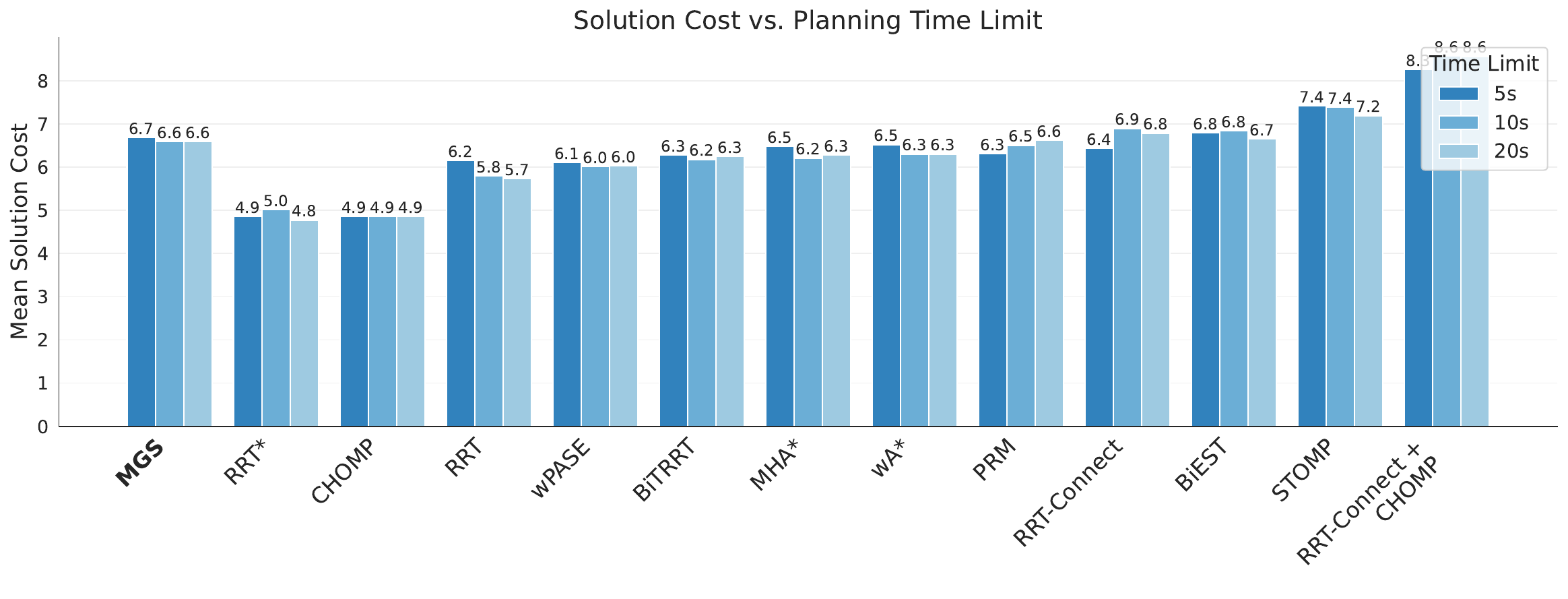}
        \caption{Solution costs for varying time limits.}
    \end{subfigure}
    \caption{Ablation on the planning time limit for time limits of 5s, 10s, and 20s.}
    \label{fig:time_limit_ablation}
\end{figure}

We study the impact of the planning time limit on performance.
Fig.~\ref{fig:time_limit_ablation} shows success rates and solution costs for varying time limits (5s, 10s, 20s) in the shelf pick-and-place scenario and the cage extraction scenario.
The relative performance between planners remains consistent across time limits.
Notably, sampling-based planners occasionally exhibit decreased success rates at larger time limits (e.g., 10s to 20s); this counterintuitive behavior stems from statistical variance inherent to their randomized nature, unlike search-based planners which are deterministic and show monotonic improvement with increased time.

\begin{figure}[h!]
    \centering
    \includegraphics[width=\columnwidth]{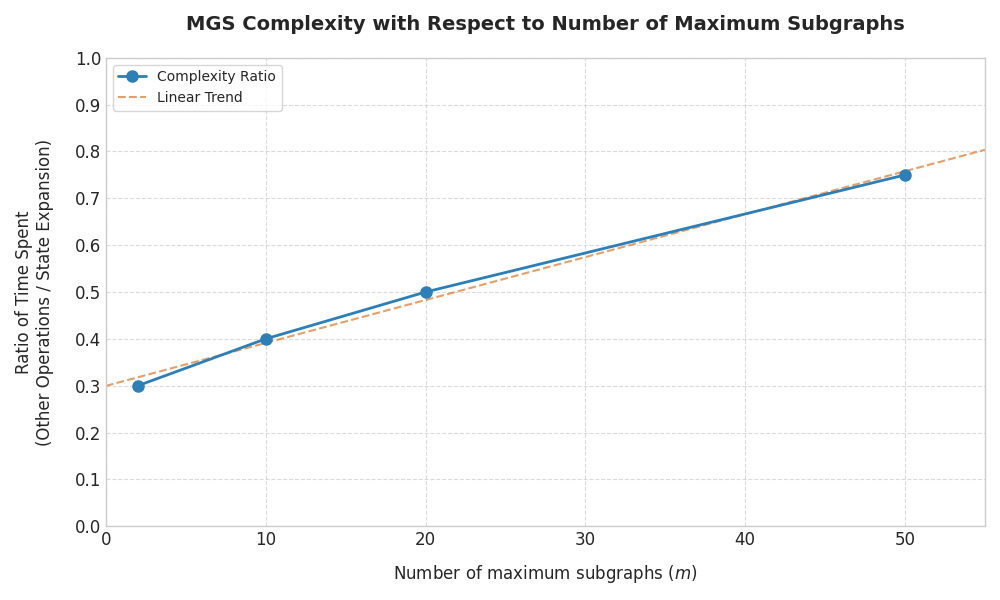}
    \caption{Computational overhead as a function of the number of sub-graphs. The y-axis shows the ratio of time spent in auxiliary operations (\textsc{TryToConnect}, \textsc{MergeSubGraphs}, sub-graph management) to time spent in state expansion. As the number of sub-graphs increases, the overhead grows approximately linearly.}
    \label{fig:complexity_vs_subgraphs}
\end{figure}

\begin{figure*}[htbp]
    \centering
    \begin{subfigure}[b]{0.48\textwidth}
        \centering
        \includegraphics[width=\columnwidth]{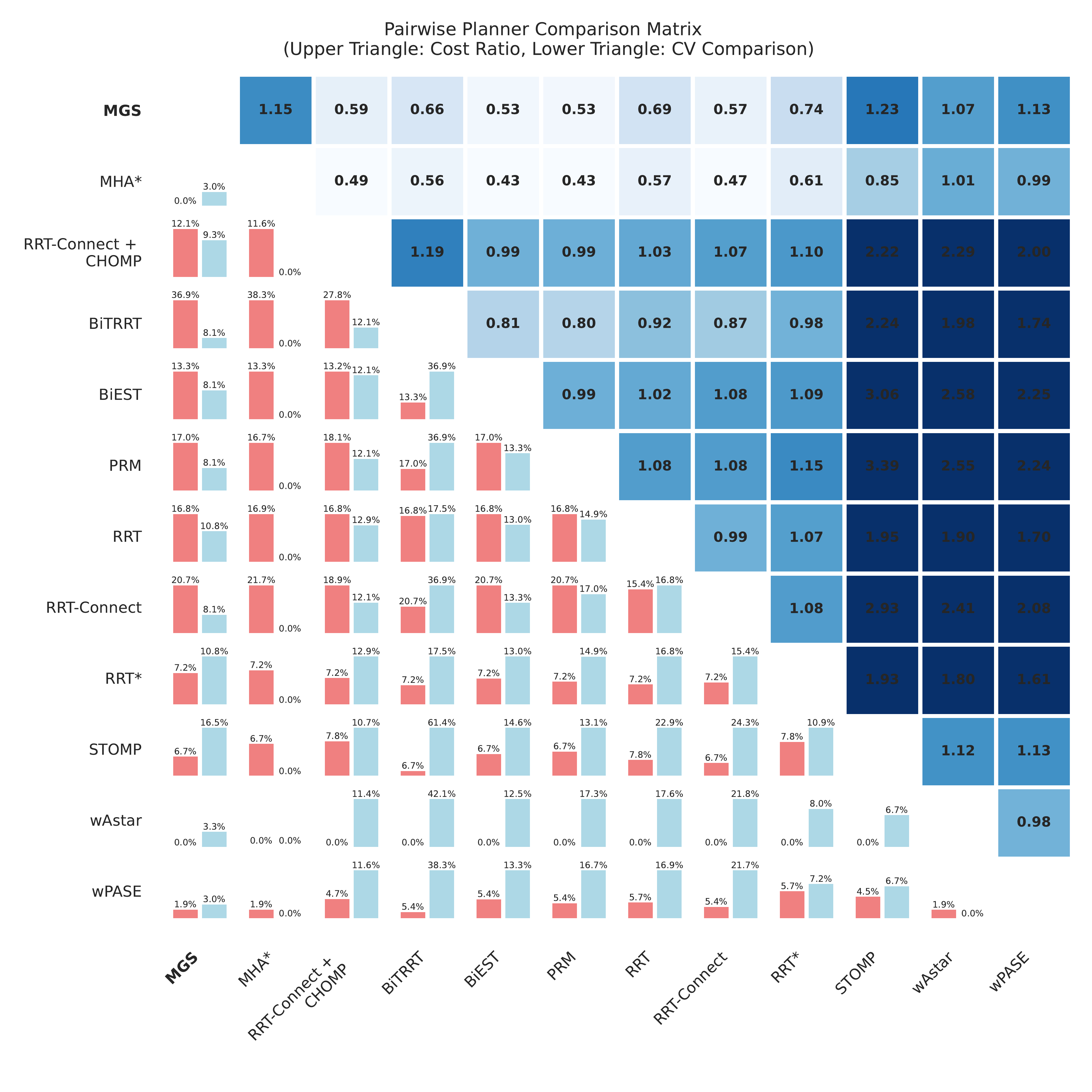}
        \caption{Low-clearance passage.}
    \end{subfigure}
    \hfill
    \begin{subfigure}[b]{0.48\textwidth}
        \centering
        \includegraphics[width=\columnwidth]{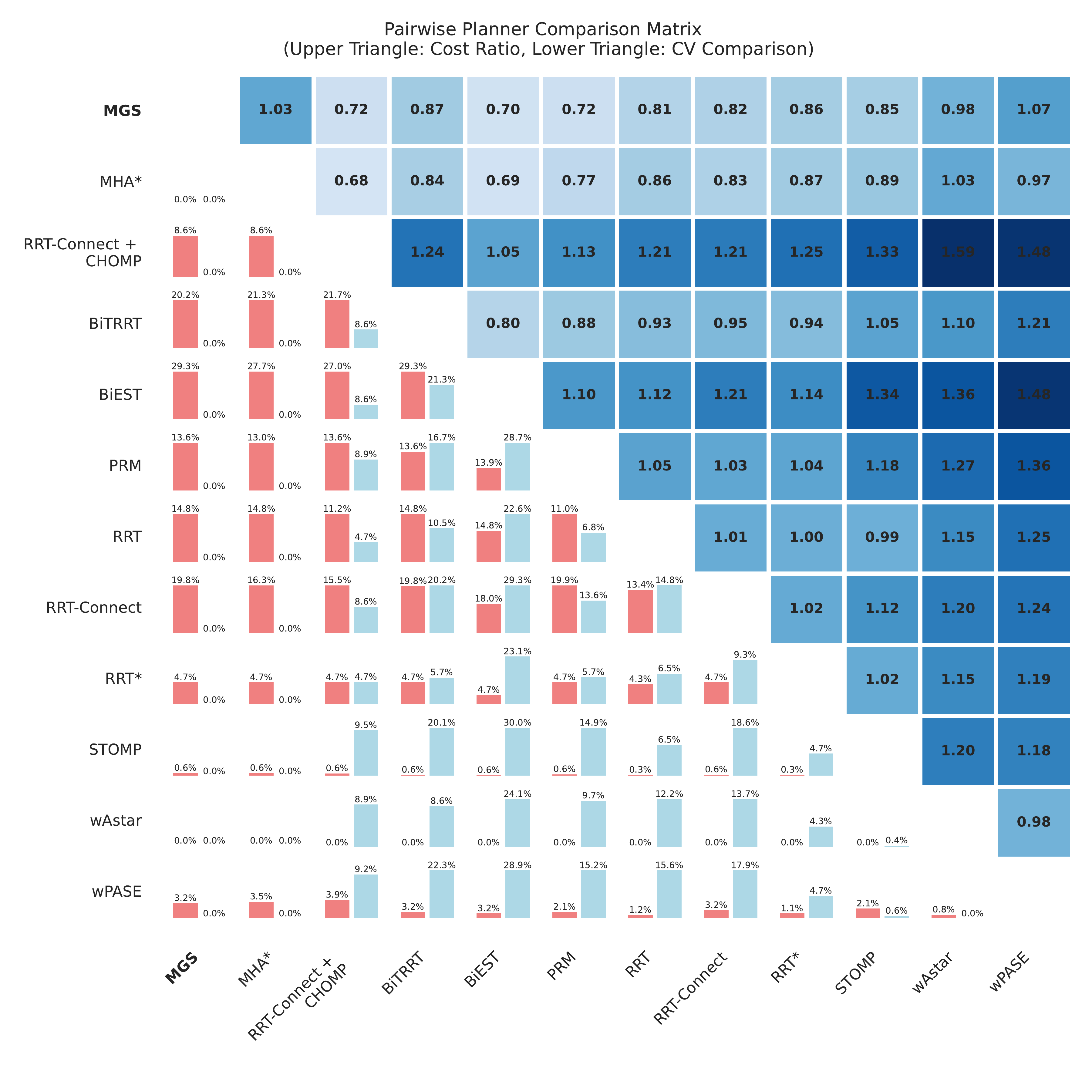}
        \caption{Deep shelf reach.}
    \end{subfigure}
    \\[0.5em]
    \begin{subfigure}[b]{0.48\textwidth}
        \centering
        \includegraphics[width=\columnwidth]{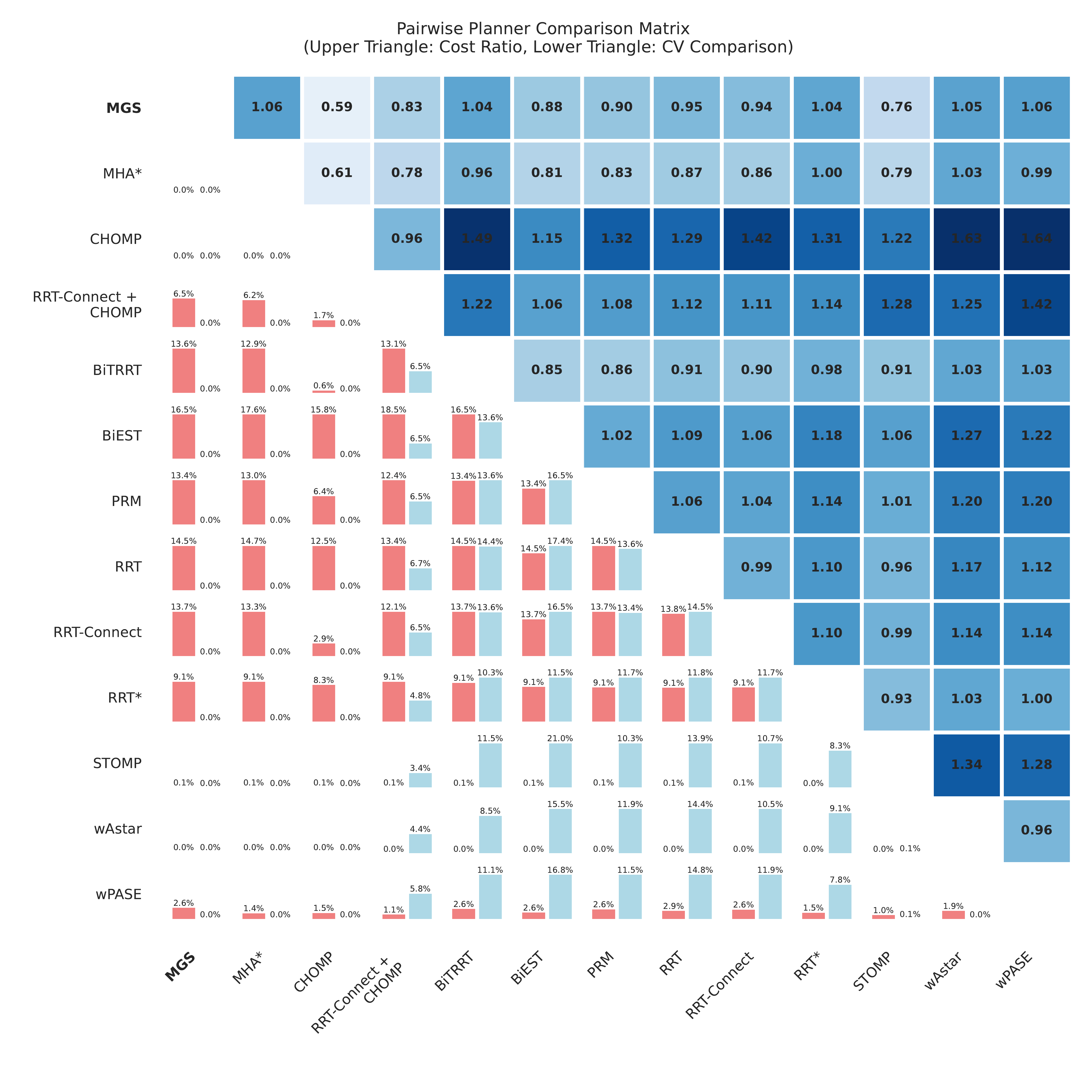}
        \caption{Cluttered table.}
    \end{subfigure}
    \hfill
    \begin{subfigure}[b]{0.48\textwidth}
        \centering
        \includegraphics[width=\columnwidth]{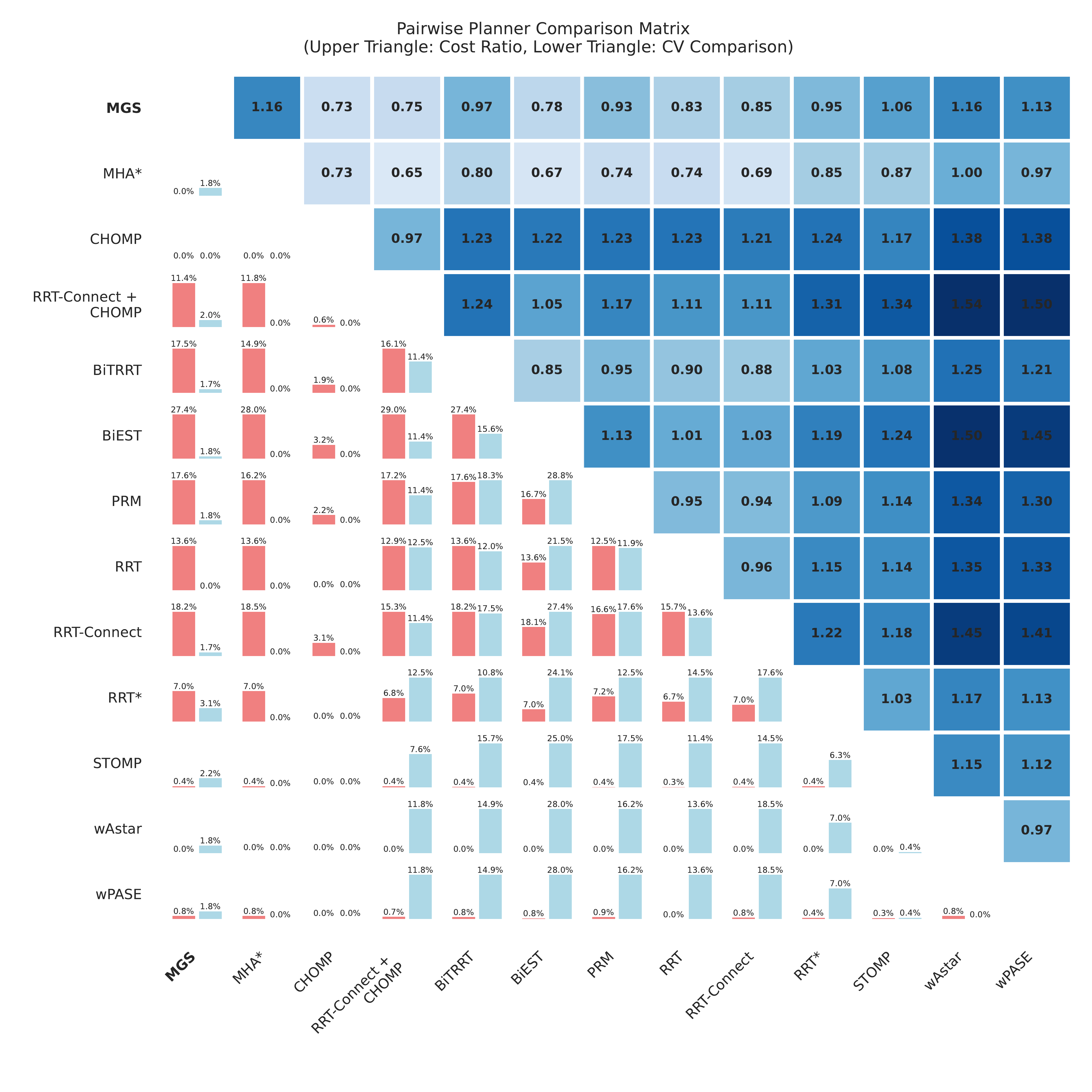}
        \caption{Combined warehouse.}
    \end{subfigure}
    \caption{Pairwise comparison matrices for mobile manipulation scenarios (9-DOF Ridgeback + UR10e). The upper triangle shows the pairwise relative cost ratio (row planner divided by column planner), computed only on queries where both planners succeeded---values below 1.0 indicate the row planner produces lower-cost solutions. The lower triangle shows the coefficient of variation (CV) of these cost ratios, measuring consistency---lower values indicate more predictable pairwise behavior.}
    \label{fig:per_scenario_conf_mobile}
\end{figure*}

\begin{figure*}[htbp]
    \centering
    \begin{subfigure}[b]{0.48\textwidth}
        \centering
        \includegraphics[width=\columnwidth]{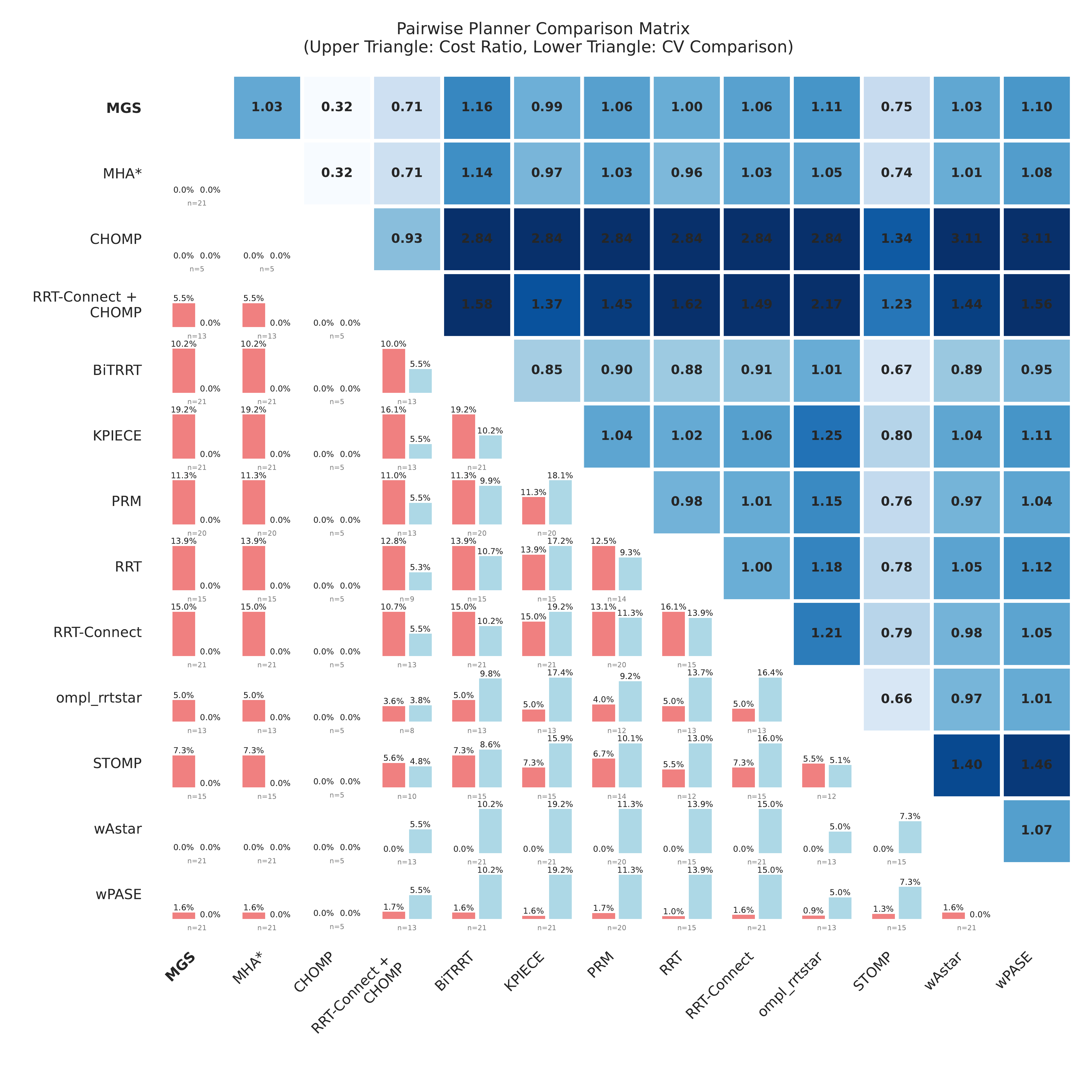}
        \caption{Shelf pick-and-place.}
    \end{subfigure}
    \hfill
    \begin{subfigure}[b]{0.48\textwidth}
        \centering
        \includegraphics[width=\columnwidth]{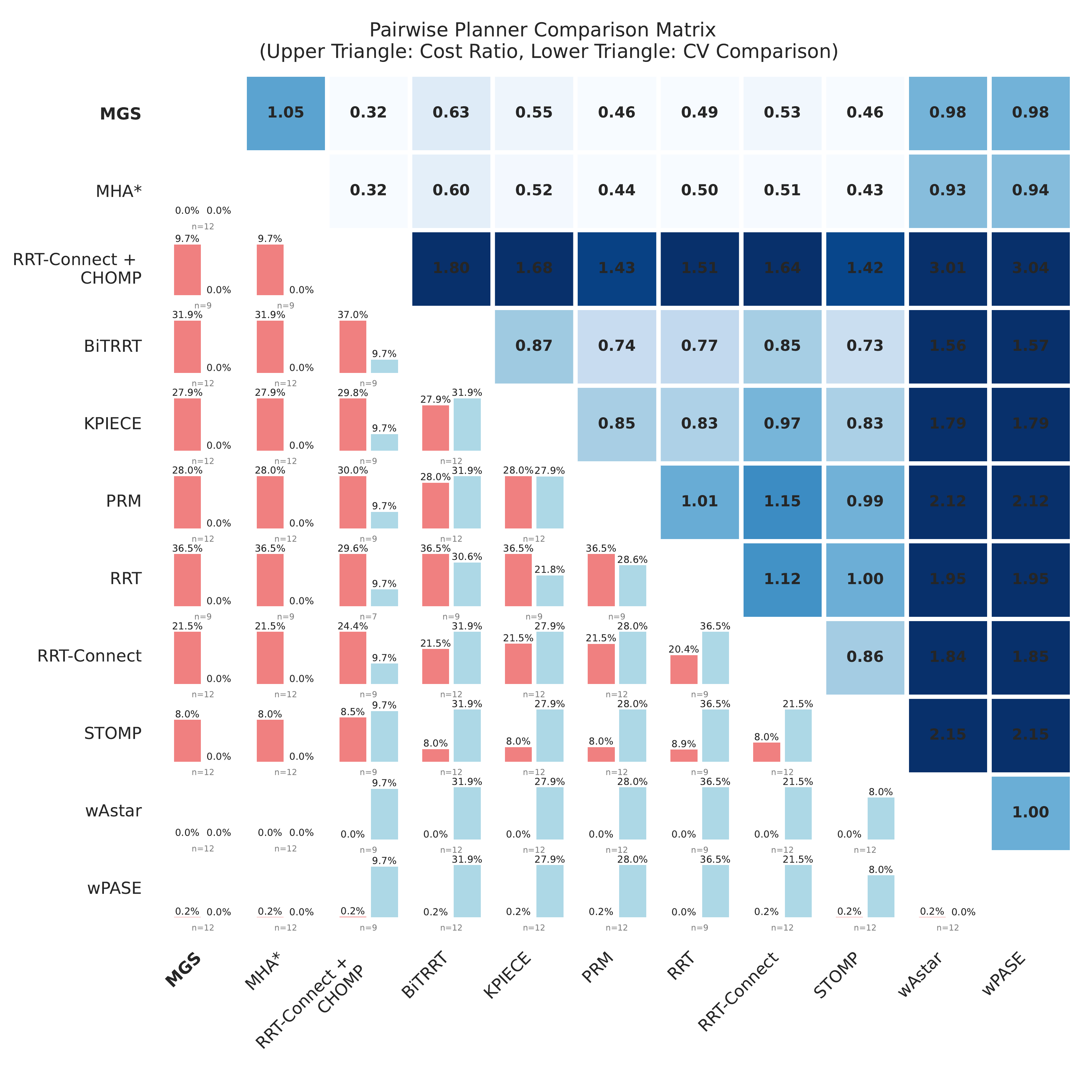}
        \caption{Bin picking.}
    \end{subfigure}
    \\[0.5em]
    \begin{subfigure}[b]{0.48\textwidth}
        \centering
        \includegraphics[width=\columnwidth]{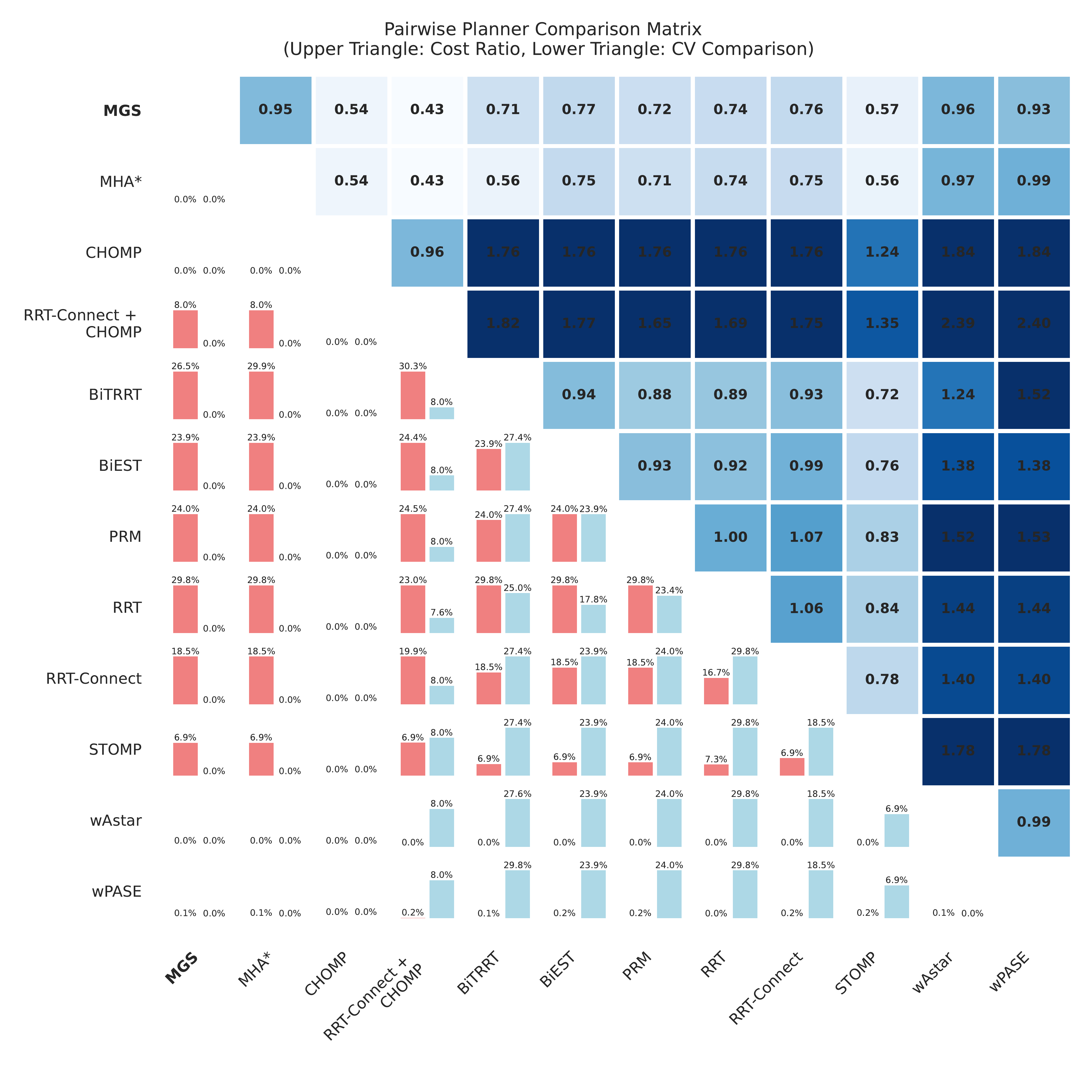}
        \caption{Cage extraction.}
    \end{subfigure}
    \hfill
    \begin{minipage}[b][0.5\textwidth][c]{0.48\textwidth}
        \caption{Pairwise comparison matrices for manipulation scenarios (7-DOF Franka Panda). As in Fig.~\ref{fig:per_scenario_conf_mobile}, the upper triangle shows pairwise relative cost ratios and the lower triangle shows consistency (CV). Results exhibit similar trends: \mgs{} produces competitive solution costs with low variance across queries.}
        \label{fig:per_scenario_conf_manip}
    \end{minipage}
\end{figure*}

\subsection{Computational Overhead vs.\ Number of Sub-graphs}
\label{subsec:appendix_complexity}

We analyze how the computational overhead of \mgs{} scales with the number of sub-graphs.
Fig.~\ref{fig:complexity_vs_subgraphs} shows the ratio of time spent in auxiliary operations---including \textsc{TryToConnect}, \textsc{MergeSubGraphs}, and sub-graph bookkeeping---to the time spent in state expansion.
The overhead exhibits an approximately linear trend with respect to the number of sub-graphs: each additional sub-graph introduces connection attempts and potential merge operations that scale with the number of existing sub-graphs.

Empirically, we found that $m = 10$ sub-graphs provides a good trade-off between performance gains and computational cost.
Beyond this point, the improvements in success rate and solution quality begin to stagnate, while the overhead continues to grow linearly.
This observation guided our choice of the default sub-graph budget used throughout the experiments.

\subsection{Generalization}
\mgs{} is designed as a general framework for planning with undirected graphs.
The multi-directional approach is a general planning paradigm, complemented by a root selection procedure that can be adapted to different domains.
To demonstrate this generalization, we applied \mgs{} to 3D navigation problems (x-y-$\theta$) with a footprint (e.g., planning for the base of a mobile manipulator requires full-body collision checking).
We constructed occupancy grids of the environment, computed attractors in the 2D workspace (x-y) as in Section~\ref{subsec:root_selection}, and mapped them to (x-y-$\theta$) configurations using a simple heuristic: for attractors closer to the start, we set $\theta$ to match the start orientation; for attractors closer to the goal, we set $\theta$ to match the goal orientation.
Collision checking is performed for the entire robot---mobile base and arm.
See Fig.~\ref{fig:2d_occ_grid} for an example environment, its occupancy grid representation, and a path planned by \mgs{}.

\begin{figure}[htbp]
    \centering
    \begin{subfigure}{\columnwidth}
        \centering
        \includegraphics[width=0.95\columnwidth]{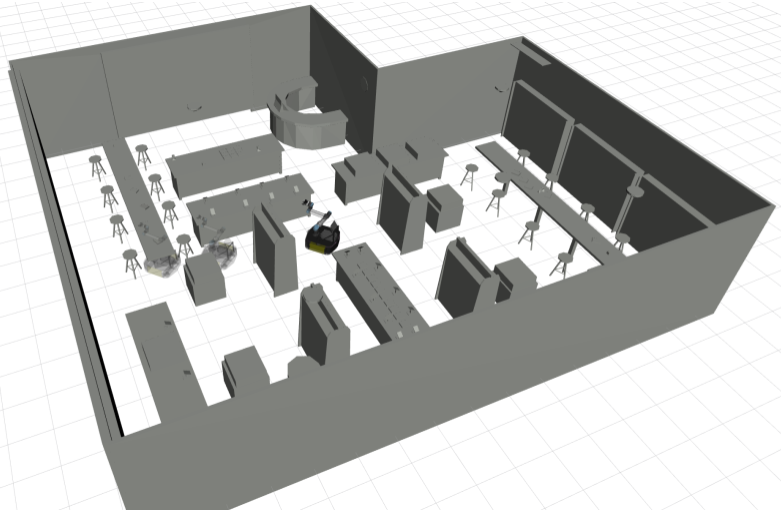}
        \caption{AWS RobotMaker Bookstore World Environment \href{https://github.com/aws-robotics/aws-robomaker-bookstore-world}{https://github.com/aws-robotics/aws-robomaker-bookstore-world}}
    \end{subfigure}
    \hfill
    \begin{subfigure}{\columnwidth}
        \centering
        \includegraphics[width=0.95\columnwidth]{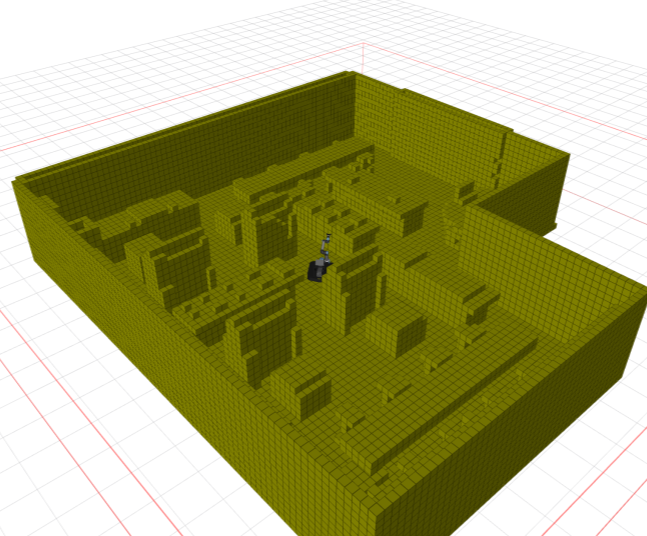}
        \caption{Occupancy grid}
    \end{subfigure}
    \hfill
    \begin{subfigure}{\columnwidth}
        \centering
        \includegraphics[width=0.95\columnwidth]{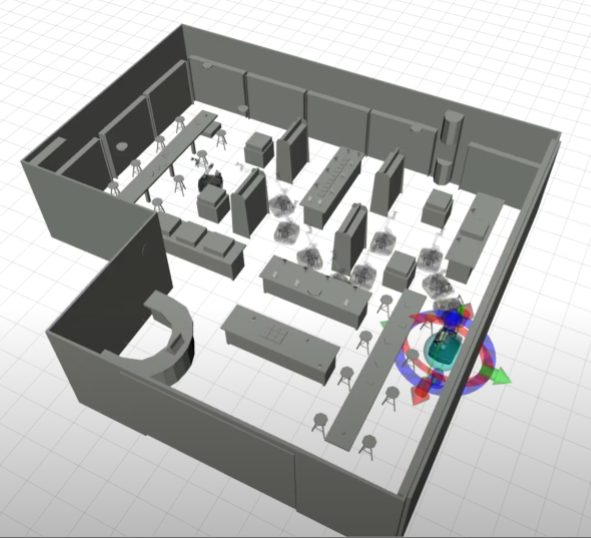}
        \caption{Path planned by \mgs{}}
    \end{subfigure}
    \caption{Example 3D navigation environment (top) and its corresponding occupancy grid representation (middle). The occupancy grid is used for attractor computation in the root selection procedure. The bottom image shows a path planned by \mgs{} in this environment, demonstrating the framework's applicability beyond manipulation tasks.}
    \label{fig:2d_occ_grid}
\end{figure}

\subsection{Impact of Joint Limits}
\label{subsec:joint_limits}

Search-based methods, including \mgs{}, explore states systematically via successor generation from the current state.
As a result, performance is largely unaffected by the range of joint limits: increasing the limits of a particular joint (e.g., the base joints of a mobile manipulator) does not alter the search behavior or solution quality, provided the start and goal configurations remain reachable.

Sampling-based planners, in contrast, are highly sensitive to joint-limit ranges.
Wider joint limits enlarge the sampling domain, diluting the probability of sampling configurations in task-relevant regions.
This leads to degraded planning performances including success rate and planning time, and the solutions are often highly suboptimal.
The effect is particularly pronounced in mobile manipulation, where the base joints can have large ranges relative to the arm joints, skewing the sampling distribution away from the manipulation workspace.

\subsection{Failure Cases}
\label{subsec:failure_cases}

While \mgs{} improves performance in many scenarios, several limitations exist.
The root selection procedure computes attractors via BFS in the workspace, which may not reflect configuration-space connectivity; if an obstacle blocks the arm but not the end-effector path, no attractor will be placed to help navigate around it.
Similarly, if a critical narrow passage in configuration space does not coincide with any attractor, \mgs{} gains no advantage over standard bidirectional search.
Mapping workspace attractors to configuration-space roots relies on differential IK, which may fail in highly constrained scenarios or near singularities, causing some attractors to be discarded.
Additionally, when paths are relatively direct, maintaining multiple sub-graphs introduces overhead without proportional benefit, and a well-guided unidirectional search may outperform \mgs{}.

Finally, because attractors are identified where the BFS wavefront diverges around obstacles, they inherently ``hug'' obstacle boundaries, and paths through these intermediate goals tend to remain close to obstacles.
In applications where maximizing clearance is desirable---such as safety-critical tasks or environments with uncertain obstacle geometry---this behavior may be undesirable, and alternative attractor selection strategies that balance efficiency with clearance could address this limitation.

\end{document}